\documentclass{article}

\PassOptionsToPackage{numbers, compress}{natbib}

\usepackage[main, final]{neurips_2026}

\usepackage[utf8]{inputenc} 
\usepackage[T1]{fontenc}    
\usepackage{hyperref}       
\usepackage{url}            
\usepackage{booktabs}       
\usepackage{multirow}       
\usepackage{amsmath}        
\usepackage{amsfonts}       
\usepackage{nicefrac}       
\usepackage{microtype}      
\usepackage{xcolor}         

\usepackage{graphicx}
\usepackage{wrapfig}
\usepackage{adjustbox}
\usepackage{amsthm}

\newtheorem{theorem}{Theorem}
\newtheorem*{theorem*}{Theorem}
\newtheorem{proposition}{Proposition}

\title{Cylindrical Geodesic Flow Matching for Quasiperiodic Physiological Signal Transformation}

\author{%
  Onur Selim Kilic \\
  Georgia Tech\\
  okilic3@gatech.edu
  \And
  Afra Nawar \\
  Georgia Tech\\
  anawar3@gatech.edu
  \And
  Cem Okan Yaldiz \\
  Georgia Tech\\
  cyaldiz3@gatech.edu
  \And
  Michael J.~Cho \\
  Georgia Tech\\
  mcho314@gatech.edu
  \And
  Ahmet Rasim Emirdagi \\
  Georgia Tech\\
  aemirdagi3@gatech.edu
  \And
  Demet Tangolar \\
  Georgia Tech\\
  dtangolar3@gatech.edu
  \AND
  Amirali Aghazadeh \\
  Georgia Tech\\
  amiralia@gatech.edu
  \And
  Amit J. Shah \\
  Emory University\\
  ajshah3@emory.edu
  \And
  Omer T. Inan \\
  Georgia Tech\\
  oinan3@gatech.edu
}

\begin{document}

\maketitle

\begin{abstract}
Paired translation between quasiperiodic physiological waveforms (i.e., recovering a target oscillatory signal from the source) is central to the interpretation of cardiovascular signals derived from wearables placed at different body locations.
This source-to-target mapping in these problems carries inherent geometric structure: the phase wraps around the cycle and must be treated as a circular variable, the amplitude remains strictly positive, and the beat-to-beat alignment can drift unpredictably across cycles and subjects. While deep neural networks have been used for phase estimation and complex-valued signal modeling, prior work does not explicitly learn phase transport between paired signals. Consequently, neither endpoint-supervised regression nor the standard affine path used in flow matching accounts for this phase--amplitude structure.
We introduce \emph{cylindrical geodesic flow matching} for paired cardiovascular waveform translation. We show that the standard affine path used in flow matching distorts intermediate amplitude and instantaneous frequency when interpolating between quasiperiodic signals; replacing it with a closed-form geodesic on the phase--amplitude cylinder eliminates these artifacts and converts each training pair into dense, geometry-consistent velocity supervision.
On zero-shot photoplethysmography and limited-support seismocardiography adaptation benchmarks, our method consistently outperforms interpolation baselines and matches or exceeds direct supervised prediction, reducing Hilbert Transform, $L_2$, and Dynamic Time Warping distance by up to ${\sim}15\%$ over the strongest competing baseline. These results suggest that bridge geometry is a critical inductive bias for flow matching on oscillatory signal translation.
\end{abstract}

\section{Introduction}
\label{sec:introduction}

Oscillatory physiological signals such as the photoplethysmogram (PPG) or seismocardiogram (SCG) are commonly used in wearable cardiovascular monitoring \cite{castaneda2018review,kim2022advances,inan2012bcg,castiglioni2007wearable,di2013wearable,inan2015review,etemadi2018wearable,inan2018novel,ganti2022stroke, tangolar2025enabling}. Their morphology, however, depends strongly on measurement location: PPG waveforms vary substantially across body sites \cite{hartmann2019quantitative,liu2021filtering,rahme2024comparative,nawar2025elucidating, tangolar2026establishing}, and SCG recordings vary across chest placements and subjects \cite{pandia2012resp,pandia2013freq,inan2013beat,ashouri2017misplacement,ashouri2018pep,zia2020quality,soliman2022tidal,nikbakht2023synthetic,lin2023realtime,munck2020multichannel,santucci2023waveform}. Many algorithms and reference standards are calibrated on canonical sensing locations, such as sternal SCG and finger PPG, where mechanical and vascular biomarkers are most reliably expressed. As wearables move toward more convenient but morphologically distinct sensing sites, accurate recovery of target-site morphology is necessary to preserve compatibility with validated diagnostic pipelines and support reliable downstream cardiovascular inference. Location-to-location waveform translation is therefore a natural paired problem: given a source signal at one site, recover the corresponding waveform (i.e., target) at another. 

This translation is difficult because physiological waveforms are quasiperiodic, that is, they exhibit repeating oscillatory structure whose phase, amplitude, and timing vary across cycles, subjects, and sensing locations. As a result, the source-to-target relationship is not a simple sample-by-sample transformation: phase is circular, amplitude is strictly positive, and inter-cycle alignment can drift unpredictably.

\begin{wrapfigure}[16]{r}{0.5\linewidth}
  \vspace{-0.75cm}
  \centering
  \includegraphics[width=\linewidth]{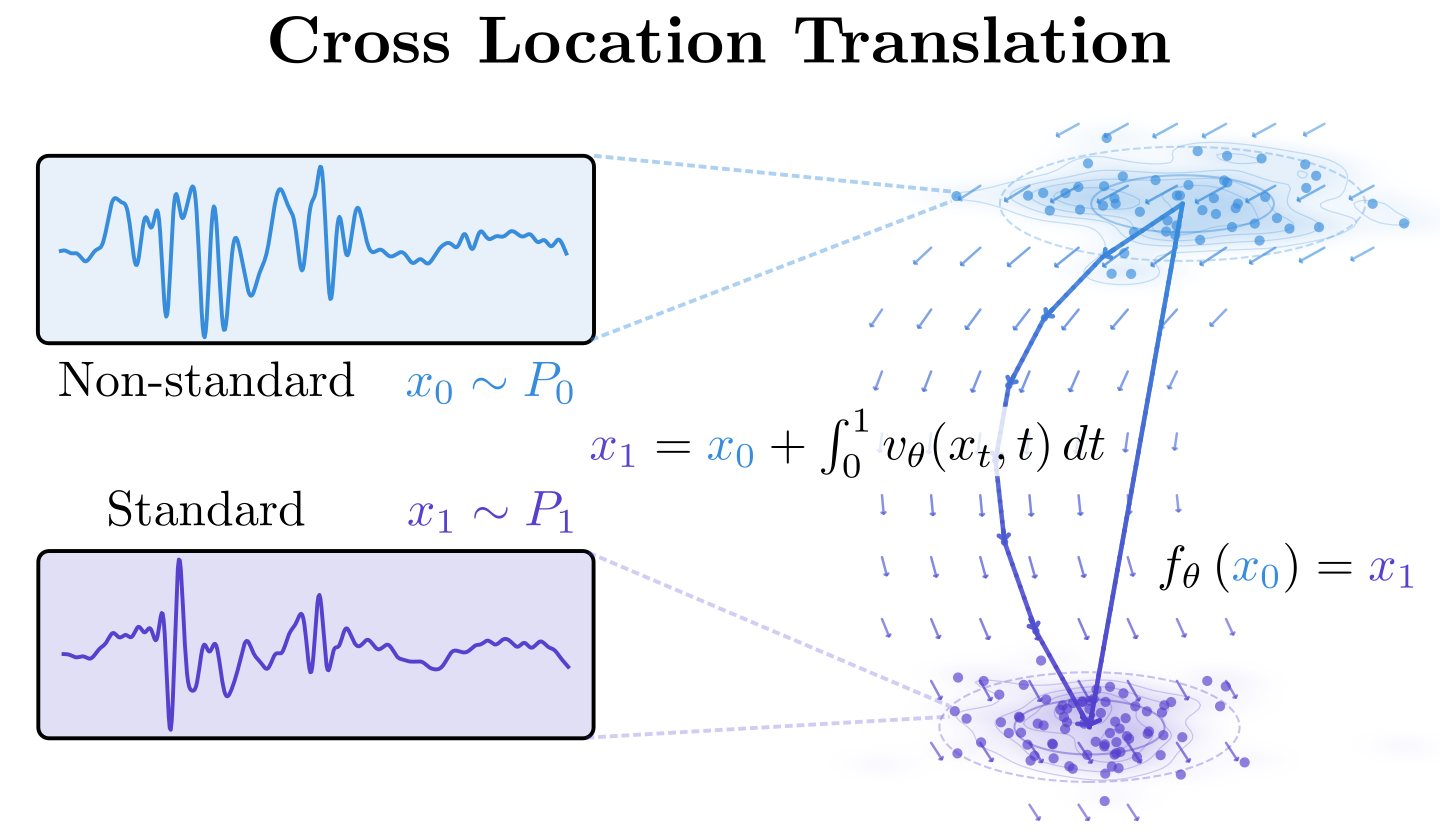}
  \caption{Cylindrical geodesic flow matching converts each source--target pair into dense velocity supervision along a geometry-aware bridge on the phase--amplitude cylinder, rather than relying on a single endpoint residual.}
  \label{fig:concept}
  \vspace{3cm}
\end{wrapfigure}

Direct supervised regression optimizes the prediction target but provides only sparse supervision. Each paired example contributes a single source-to-target residual, and the model must discover the underlying phase--amplitude geometry entirely from data. This is especially limiting when paired examples are scarce or cross-location alignment is ambiguous.

Conditional flow matching \cite{lipman2022flow,tong2023conditional} offers a complementary perspective. Each paired example generates supervision along an entire interpolating bridge, decomposing a difficult global transformation into a family of local velocity-regression problems \cite{song2020score}. Recent work has begun to apply advances in flow-matching to clinical and physiological time series, including irregular clinical trajectory modeling \cite{zhang2024trajectory}, electrocardiogram (ECG) synthesis \cite{bondar2025flowecg}, and PPG-conditioned vital-sign reconstruction \cite{suzuki2026penguin}.

Figure~\ref{fig:concept} illustrates this training view. Instead of learning from a single endpoint residual, the model receives dense supervision along the bridge between paired signals. Here, for example, ``non-standard'' may refer to a less-explored body location, such as the clavicle for SCG signals, whereas ``standard'' refers to a well-established location, such as the sternum. The resulting dense supervision yields a more favorable optimization landscape, but only if the bridge itself respects signal geometry. The standard affine path used in existing physiological flow-matching models cuts across the natural phase--amplitude structure of quasiperiodic signals, producing intermediate states with distorted amplitude and spurious instantaneous frequency, which can ultimately degrade the translated waveform.

To address this geometric mismatch, we introduce \emph{cylindrical geodesic flow matching}. Specifically, we represent each signal sample by its instantaneous phase $\phi$ and amplitude $a$ as $(\cos \phi, \sin \phi, a) \in S^1 \times \mathbb{R}_{>0}$, where $S^1$ denotes the unit circle. This embedding removes the discontinuity at phase wrapping while keeping amplitude explicit. On this cylinder, we then replace the standard affine interpolation path with a closed-form geodesic that respects phase periodicity by construction, eliminates the amplitude and frequency distortions inherent to the affine path, and yields an oracle vector field aligned with the phase--amplitude geometry of the task. We further parameterize the learned vector field intrinsically. Instead of predicting an unconstrained vector in $\mathbb{R}^3$, the model predicts phase and amplitude velocities, which are mapped to a tangent vector on the cylinder; combined with a per-step projection, this keeps the numerical trajectory on the manifold throughout integration.

The cylindrical representation, geodesic interpolation, and intrinsic tangent parameterization define a geometry-aware training framework for quasiperiodic waveform translation. These components address complementary aspects of the phase--amplitude structure, from endpoint representation and bridge construction to the learned dynamics along the path. On zero-shot PPG and limited-support SCG benchmarks, we demonstrate that the resulting framework consistently outperforms interpolation baselines and matches or exceeds direct supervised prediction across all reported metrics. Therefore, our main contributions are:
\begin{itemize}
    \item A cylindrical manifold representation $(\cos \phi, \sin \phi, a) \in S^1 \times \mathbb{R}_{>0}$ for physiological signals that eliminates phase-wrap discontinuities and preserves positive-amplitude structure. To our knowledge, no prior work explicitly learns phase transport between paired signals. Our formulation enables stable, direct learning of cross-signal phase–amplitude correspondences on the appropriate product manifold.
    \item A closed-form geodesic path and oracle velocity field on the cylinder for conditional flow matching, converting each training pair into dense, geometry-consistent supervision.
    \item Theoretical results showing that the standard affine path fails to preserve the phase--amplitude geometry of quasiperiodic cardiovascular signals, inducing amplitude shrinkage and instantaneous-frequency warping that can lead to lower quality translated signals, whereas the cylindrical geodesic preserves this geometry.
    \item Empirical validation across a range of settings, demonstrating that cylindrical geodesic flow matching outperforms competing methods.
\end{itemize}

\section{Related Work}
Supervised physiological signal translation and waveform regression form the closest task-level literature for grounding this work, as they learn direct mappings from paired source–target waveforms. The most relevant examples are paired biosignal reconstruction methods: Zhu \emph{et al.}\ map PPG to ECG in a transformed coefficient space, Tang \emph{et al.}\ develop subject-based PPG-to-ECG reconstruction, and Slapničar \emph{et al.}\ revisit the practical limits of ECG reconstruction from PPG under different alignment and generalization settings \cite{zhu2021heartactions,tang2022robust,slapnivcar2024fundamental}. Related paired-regression formulations also appear in acquisition-conditioned PPG correction, where Pham \emph{et al.}\ learn to restore distorted wrist PPG toward an ideal morphology \cite{pham2025reliable}, and in cross-sensor cardiac waveform recovery, where Skorić \emph{et al.}\ reconstruct ECG, impedance cardiography, blood pressure, and PPG waveforms from a single vibrational cardiography sensor \cite{skoric2025generative}. Although these settings differ in modality and sensor configuration, they share the same supervised endpoint-learning paradigm: paired examples train a predictor for the final waveform. These methods are strong when the paired mapping is informative, but they supervise only the endpoint and do not encode how a physiologically plausible source-to-target transformation should unfold. Flow matching is useful here not merely because it is continuous, but because each paired example induces supervision along an entire transport path. That path can be chosen to respect phase--amplitude geometry, so the model learns local, geometry-consistent dynamics instead of inferring the whole transformation from a single endpoint loss.

Flow matching recently emerged from the score-based and diffusion modeling literature \cite{ho2020ddpm,song2020score,song2020ddim} as a deterministic transport alternative that directly regresses the velocity field of a prescribed probability path \cite{lipman2022flow}. Conditional flow matching specializes this idea to paired endpoints, so that each source–target example supplies dense velocity supervision along an interpolating bridge \cite{tong2023conditional}. This conditional formulation has also begun to appear in clinical and physiological time-series applications. Trajectory Flow Matching adapts the framework to irregularly sampled clinical trajectories, learning continuous-time dynamics over incomplete patient records \cite{zhang2024trajectory}. FlowECG uses flow matching for efficient synthetic ECG generation, targeting data augmentation for downstream classifiers \cite{bondar2025flowecg}. PENGUIN combines flow matching with state-space modeling for PPG-conditioned reconstruction of ECG, respiration, and arterial blood pressure waveforms \cite{suzuki2026penguin}. These works establish flow matching as a viable tool in the physiological domain, yet each addresses trajectory modeling, waveform generation, or broad multi-signal reconstruction, and all rely on generic affine interpolation bridges. None asks whether the path itself should respect the geometry of the state space through which transport occurs, a question whose answer depends on the structure of the underlying signal morphology.

Riemannian flow matching demonstrates that the conditional framework generalizes naturally to manifold-valued data when geodesics replace straight-line interpolants, yielding geometry-consistent velocity fields for unconditional generation \cite{chen2024generalgeometries}. More broadly, manifold-aware generative models have shown improved sample quality when the transport map honors non-Euclidean data geometry \cite{mathieu2020riemannian}, and directional statistics provides the classical foundation for treating circular variables as intrinsically non-Euclidean objects whose arithmetic cannot be reduced to ordinary subtraction and averaging \cite{mardia2000directional,landler2018circular}. These results establish that path design should be informed by the manifold on which the data reside, yet existing applications target unconditional generation on abstract geometric spaces rather than conditional paired translation of physiological quasiperiodic waveforms. To instantiate a geometry-aware bridge for cardiovascular signal translation, one therefore needs a representation that exposes the relevant manifold structure of the waveforms themselves.

The analytic-signal formalism supplies exactly this representation. Gabor's original construction and Boashash's treatment of instantaneous phase and frequency \cite{gabor1946theory,135376} established the Hilbert transform as the standard tool for decomposing a real signal into an instantaneous amplitude envelope and an instantaneous phase, and extended Hilbert-transform variants have since been proposed to recover phase more robustly beyond ideal narrow-band regimes \cite{matsuki2023extended}. Functional-data analysis reinforces the importance of this decomposition by treating amplitude and phase as distinct sources of variation rather than absorbing both into a single pointwise discrepancy \cite{Marron_2015}. The same separation is operationalized in registration and alignment: multiresolution warping models phase variability explicitly across scales \cite{claeskens2010multiresolution}, and Fisher–Rao registration exploits the Riemannian structure of warping functions to align neural signals with pronounced phase variability \cite{zhao2020fisherrao}. Taken together, these results show that each sample of an analytic physiological signal naturally inhabits the product manifold $S^{1}\!\times\!\mathbb{R}_{>0}$, where the circular coordinate carries instantaneous phase and the positive-real coordinate carries instantaneous amplitude. 

Our study connects these four threads: by lifting paired cardiovascular waveforms into analytic-signal coordinates, we obtain the product-manifold state space on which Riemannian flow matching yields closed-form geodesic bridges and oracle velocities that decouple angular and amplitude dynamics, providing dense, geometry-consistent supervision that neither purely endpoint-supervised predictors nor generic affine-interpolation flow models supply, reducing HT Dist, $L_2$, and DTW by up to ${\sim}15\%$ over the strongest competing baseline on our SCG and PPG translation benchmarks.

\section{Background and Problem Setup}
\label{sec:background}
We consider paired signal transformation tasks with source waveform $x_0$ and target waveform $x_1$. For example, $x_0$ may be a waveform recorded at one sensing location and $x_1$ the corresponding waveform recorded at another, such as translating PPG from the sternum to the finger or SCG from the clavicle to the sternum. Let $\tau$ denote the original time variable and let $\mathcal{A}$ denote a generic analytic-signal transform applied to either waveform. For each signal $x_n$, where $n \in \{0,1\}$ indexes the source and target endpoints, the transform yields a complex analytic representation
\begin{equation}
    \zeta_n(\tau) = \mathcal{A}(x_n)(\tau) = a_n(\tau)e^{i\phi_n(\tau)}, \qquad n \in \{0,1\}.
    \label{eq:analytic-signal}
\end{equation}
Equation~\eqref{eq:analytic-signal} is the polar form of the complex coefficient $\zeta_n(\tau) \in \mathbb{C}$ at time $\tau$. Here $a_n(\tau) = |\zeta_n(\tau)| > 0$ is the instantaneous amplitude, $\phi_n(\tau) \in \mathbb{R}$ is an instantaneous phase angle measured in radians, and $e^{i\phi_n(\tau)}$ is the unit-modulus complex factor that carries phase only. Thus, each time sample is decomposed into a positive magnitude and an angular coordinate. The theory below is agnostic to the particular transform used to define $\zeta_n$. In experiments, we instantiate $\mathcal{A}$ with Hilbert Transform (HT) and Extended Hilbert Transform (EHT) \cite{matsuki2023extended}.

To compare paired signals at a common oscillatory resolution, we next decompose each analytic signal into phase and amplitude and evaluate both on the same uniform time grid. Let $K$ denote the number of retained samples, and let $s \in \{1,\dots,K\}$ denote the discrete sample index on that grid. After this step, $\phi_n(s)$ denotes the phase angle, in radians, at the $s$-th time-grid sample, and $a_n(s)$ denotes the corresponding amplitude. This produces the phase-amplitude state sequence
\begin{equation}
    h_n(s) = ([\phi_n(s)], a_n(s)) \in S^1 \times \mathbb{R}_{>0}.
    \label{eq:intrinsic-state}
\end{equation}
Equation~\eqref{eq:intrinsic-state} defines the intrinsic state at sample $s$ as a phase-amplitude pair, where $[\phi_n(s)]$ denotes phase modulo $2\pi$, with $[\phi_n(s)] \in S^1$ and $a_n(s) \in \mathbb{R}_{>0}$. Thus each coefficient lies in $S^1 \times \mathbb{R}_{>0}$. We then use the cylindrical embedding
\begin{equation}
    g_n(s) = \chi(h_n(s)) = \bigl(\cos \phi_n(s), \sin \phi_n(s), a_n(s)\bigr) \in \mathbb{R}^3.
    \label{eq:cylindrical-embedding}
\end{equation}
Equation~\eqref{eq:cylindrical-embedding} maps the wrapped phase to its unit-circle coordinates $(\cos \phi_n(s), \sin \phi_n(s))$ and appends amplitude as the third coordinate. This removes the artificial discontinuity between phase angles near $0$ and $2\pi$ while keeping amplitude explicit and positive. Geometrically, each coefficient becomes a point on an open cylinder in $\mathbb{R}^3$. Moreover, $\chi$ is a smooth embedding of $S^1 \times \mathbb{R}_{>0}$ into $\mathbb{R}^3$, so the cylindrical state space preserves the product structure of phase and amplitude in ambient coordinates.

The endpoint prediction problem is to estimate the target representation $g_1 = \{g_1(s)\}_{s=1}^K$ or the corresponding target waveform $x_1$ from $x_0$. A direct supervised model learns that mapping from paired endpoints alone. We instead map the same pair $(x_0, x_1)$ to $(g_0, g_1)$ and use that representation pair to generate time-indexed supervision along a structured bridge in phase-amplitude space.

\section{Method}

\label{sec:method}
\begin{figure}[t]
  \centering
  \includegraphics[width=\linewidth]{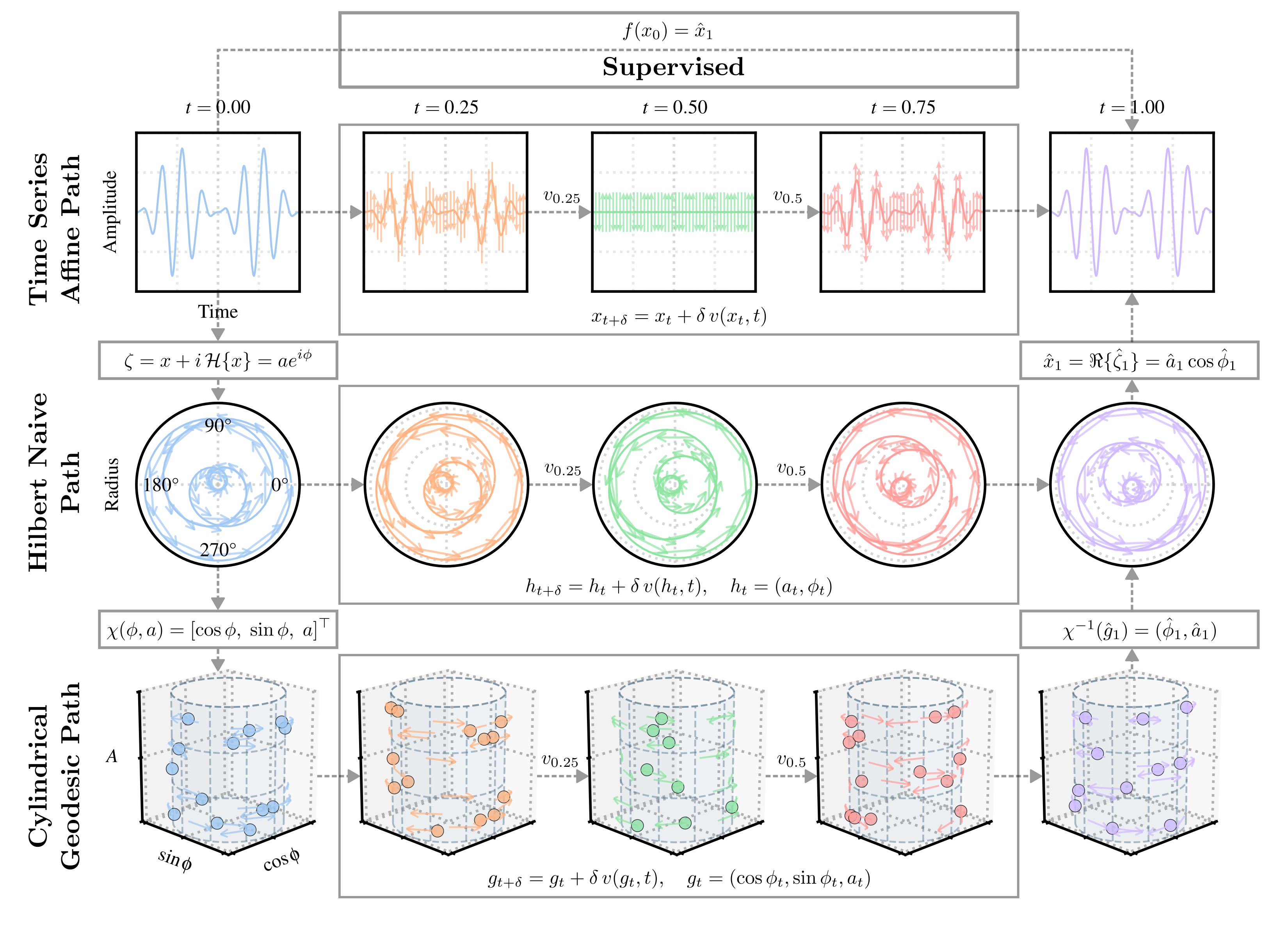}
  \caption{Method overview. A generic analytic-signal transform produces phase-amplitude coefficients, which are embedded on the cylinder, connected by a cylindrical geodesic, and used to define a tangent oracle field for conditional flow matching.}
  \label{fig:method}
\end{figure}

\subsection{Cylindrical Representation}
\label{subsec:cylindrical-representation}
Equation~\eqref{eq:cylindrical-embedding} embeds each phase-amplitude state into $\mathbb{R}^3$ as a point on an open cylinder. The angular coordinates $(\cos \phi, \sin \phi)$ accommodate the periodicity of wrapped phase on $S^1$ without introducing a discontinuity at $0 \equiv 2\pi$, while the third coordinate carries positive amplitude in $\mathbb{R}_{>0}$. This decomposition underlies both the geodesic interpolation rule and the intrinsic parameterization of the learned vector field.

Theorem~\ref{thm:embedding} shows that $\chi$ is a smooth isometric embedding of $S^1 \times \mathbb{R}_{>0}$ into $\mathbb{R}^3$ under the product metric $m = d\phi^2 + da^2$, and that on any bounded amplitude interval and any phase chart of width strictly less than $2\pi$, $\chi$ is also a smooth bi-Lipschitz change of variables. Modeling phase-amplitude coefficients and modeling their cylindrical embeddings are therefore equivalent up to a smooth change of coordinates on such local charts. In plain terms, the embedding provides a faithful 3D representation of the original variables, so we can model in the embedded space without changing the underlying local relationships in the data. The proof is deferred to Appendix~\ref{app:proofs}.

\subsection{Geodesic Interpolation on the Cylinder}
\label{subsec:geodesic-interpolation}
Given source and target states $h_0(s)=([\phi_0(s)], a_0(s))$ and $h_1(s)=([\phi_1(s)], a_1(s))$, we define the wrapped phase increment
\begin{equation}
    \Delta \phi(s) = \mathrm{wrap}\bigl(\phi_1(s)-\phi_0(s)\bigr) \in (-\pi,\pi],
    \label{eq:wrapped-phase}
\end{equation}
where $\mathrm{wrap}(\theta)$ denotes the unique representative of $\theta \bmod 2\pi$ in $(-\pi,\pi]$. We also define the amplitude increment
\begin{equation}
    \Delta a(s) = a_1(s)-a_0(s).
    \label{eq:delta-amplitude}
\end{equation}
Under the product metric $m = d\phi^2 + da^2$ on $S^1 \times \mathbb{R}_{>0}$, we use the shortest-arc, constant-speed interpolation
\begin{equation}
    \gamma_C(t;s) = \bigl([\phi_0(s) + t\,\Delta \phi(s)], \; a_0(s) + t\,\Delta a(s)\bigr), \qquad t \in [0,1].
    \label{eq:geodesic-intrinsic}
\end{equation}
Its embedded form is
\begin{equation}
    g_t(s) = \bigl(\cos \phi_t(s), \sin \phi_t(s), a_t(s)\bigr),
    \label{eq:geodesic-embedded}
\end{equation}
with $\phi_t(s) = \phi_0(s) + t\,\Delta \phi(s)$ and $a_t(s) = a_0(s) + t\,\Delta a(s)$. Proposition~\ref{prop:geodesic} shows that, for each fixed sample $s$, this path is a constant-speed minimizing geodesic joining $h_0(s)$ to $h_1(s)$, with intrinsic length
\[
    L_C(s) = \sqrt{(\Delta \phi(s))^2 + (\Delta a(s))^2},
\]
and that it is unique whenever $|\Delta \phi(s)| < \pi$. The proof is deferred to Appendix~\ref{app:proofs}.

\subsection{Oracle Geodesic Vector Field}
\label{subsec:oracle-field}
Differentiating Eq.~\eqref{eq:geodesic-embedded} yields the oracle velocity field
\begin{equation}
    v_t^\star(s)
    =
    \frac{d}{dt}g_t(s)
    =
    \begin{bmatrix}
        -\sin \phi_t(s)\,\Delta \phi(s) \\
        \phantom{-}\cos \phi_t(s)\,\Delta \phi(s) \\
        \Delta a(s)
    \end{bmatrix}.
    \label{eq:oracle-field}
\end{equation}
Proposition~\ref{prop:geodesic} shows that this oracle field has constant Euclidean norm along the bridge:
\[
    \|v_t^\star(s)\|_2^2 = (\Delta \phi(s))^2 + (\Delta a(s))^2.
\]
Because $v_t^\star$ is defined for every $(s,t)$, each paired example generates dense supervision along the entire bridge rather than a single endpoint residual. We exploit this by training a conditional vector field $v_\theta$ to minimize the flow matching loss over sampled bridge points:
\begin{equation}
    \mathcal{L}_{\mathrm{FM}}
    =
    \mathbb{E}_{(x_0,x_1),\, s,\, t}
    \Bigl[
        \bigl\|
        v_\theta(g_t(s), t, x_0) - v_t^\star(s)
        \bigr\|_2^2
    \Bigr].
    \label{eq:fm-loss}
\end{equation}
The constant-norm property ensures that the regression target has uniform scale across the bridge, so no region of $t$ dominates the loss.

\subsection{Comparison with Affine Interpolation}
\label{subsec:why-geodesic}
Affine interpolation is the default Euclidean bridge in conditional flow matching, making it the natural baseline for our geodesic construction. To isolate the effect of bridge geometry alone, we fix the endpoints, consider a single sample, and specialize to the equal-amplitude case $a_0 = a_1 = a > 0$. Let $w_0 = a e^{i\phi_0}$ and $w_1 = a e^{i(\phi_0 + \Delta \phi)}$, and define the cylindrical and affine paths in complex form as
\[
    \gamma_C(t) = a e^{i(\phi_0 + t\,\Delta \phi)},
    \qquad
    \gamma_{\mathrm{aff}}(t) = (1-t)w_0 + t\,w_1.
\]
To quantify how faithfully each path preserves the phase--amplitude structure of the endpoints, we measure two distortion functionals for any nonvanishing path $\gamma(t) = a_t e^{i\phi_t}$:
\[
    D_{\mathrm{amp}}(\gamma) = \int_0^1 (a_t-a)^2\,dt,
    \qquad
    D_{\mathrm{freq}}(\gamma) = \int_0^1 \bigl(\dot{\phi}_t-\Delta \phi\bigr)^2\,dt.
\]
\begin{theorem*}[Bridge fidelity and path-length optimality, informal]
\label{thm:main-informal}
With the notation above, suppose $\Delta\phi \neq 0$.
\begin{enumerate}
\item (Fidelity; $a_0 = a_1 = a$)
The cylindrical geodesic incurs zero distortion: $D_{\mathrm{amp}}(\gamma_C) = 0$ and $D_{\mathrm{freq}}(\gamma_C) = 0$.
The affine path incurs strictly positive distortion for every $0 < |\Delta\phi| < \pi$, with leading-order behavior as $\Delta\phi \to 0$:
\[
    D_{\mathrm{amp}}(\gamma_{\mathrm{aff}}) = \frac{a^2\,\Delta\phi^4}{120} + O(\Delta\phi^6), \qquad D_{\mathrm{freq}}(\gamma_{\mathrm{aff}}) = \frac{\Delta\phi^6}{180} + O(\Delta\phi^8).
\]
At the boundary $|\Delta\phi| = \pi$, the affine midpoint amplitude vanishes and its phase becomes undefined.

\item (Path length)
The cylindrical geodesic is length-minimizing with
\[
    L_C = \sqrt{\Delta\phi^2 + \Delta a^2}.
\]
The intrinsic lift of the affine chord, whenever well-defined, satisfies $L_{\mathrm{aff}} \ge L_C$, with equality if and only if $\Delta\phi = 0$.
\end{enumerate}
Formal statements and proofs appear in Appendix~\ref{app:proofs} (Theorems~\ref{thm:fidelity} and~\ref{thm:length}).
\end{theorem*}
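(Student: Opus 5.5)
The proof is a direct computation in polar form for each path. For part 1 I treat the two bridges separately. For $\gamma_C$ we have $a_t\equiv a$ and $\phi_t=\phi_0+t\Delta\phi$, so $\dot\phi_t=\Delta\phi$ and both distortion integrands vanish identically. For the affine path I factor out the rotation $e^{i(\phi_0+\Delta\phi/2)}$. With $\theta=\Delta\phi/2$ and $u=2t-1\in[-1,1]$ this gives
\[
\gamma_{\mathrm{aff}}(t)=a\,e^{i(\phi_0+\theta)}\bigl(\cos\theta + i\,u\sin\theta\bigr).
\]
From this I read off
\[
a_t=a\sqrt{\cos^2\theta+u^2\sin^2\theta}
\]
and a continuous phase branch $\phi_t=\phi_0+\theta+\arctan(u\tan\theta)$, valid for $|\theta|<\pi/2$, i.e.\ $|\Delta\phi|<\pi$. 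The minimum of $a_t$ is $a|\cos\theta|$, attained at $u=0$. This is strictly less than $a$ whenever $\theta\neq0$, which gives $D_{\mathrm{amp}}>0$. At $|\Delta\phi|=\pi$ it equals $0$, so the midpoint passes through the origin and the phase is undefined there.

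For the frequency distortion I differentiate:
\[
\dot\phi_t=\frac{2\tan\theta}{1+u^2\tan^2\theta}.
\]
This is not constant in $u$ when $\theta\neq 0$, and its integral over $t$ equals $\Delta\phi$. Hence $\dot\phi_t-\Delta\phi$ is a nonzero continuous function and $D_{\mathrm{freq}}>0$. For the asymptotics I expand in $\theta$. Writing $a_t/a=\sqrt{1-(1-u^2)\sin^2\theta}$, the first-order term gives $a_t-a\approx -\tfrac{a}{2}(1-u^2)\theta^2$. Then
\[
D_{\mathrm{amp}}\approx \tfrac{a^2\theta^4}{4}\cdot\tfrac12\int_{-1}^1(1-u^2)^2\,du=\tfrac{a^2\theta^4}{4}\cdot\tfrac{8}{15}=\tfrac{2a^2\theta^4}{15}=\tfrac{a^2\Delta\phi^4}{120}.
\]
Similarly, $\tan\theta=\theta+\theta^3/3+\dots$ gives
\[
\dot\phi_t-\Delta\phi=2\theta^3\bigl(\tfrac13-u^2\bigr)+O(\theta^5).
\]
The mean-zero polynomial $\tfrac13-u^2$ has $\tfrac12\int_{-1}^1(\tfrac13-u^2)^2\,du=\tfrac{4}{45}$. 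This gives $4\theta^6\cdot\tfrac{4}{45}=\tfrac{16\theta^6}{45}=\tfrac{\Delta\phi^6}{180}$. Both expansions are even in $\theta$, so the remainders are of the stated orders.

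For part 2 I invoke Proposition~\ref{prop:geodesic}, which gives that $\gamma_C$ is a minimizing geodesic of length $\sqrt{\Delta\phi^2+\Delta a^2}$ in the flat product metric. For the affine lift I write $L_{\mathrm{aff}}=\int_0^1\sqrt{\dot\phi_t^2+\dot a_t^2}\,dt$, defined when the chord avoids $0$. Two inequalities then give the bound. First, $\sqrt{\dot\phi_t^2+\dot a_t^2}$ is the Euclidean norm of $(\dot\phi_t,\dot a_t)$. Second, by Minkowski (equivalently, the triangle inequality for the integral of a vector), $L_{\mathrm{aff}}\ge\bigl|\int(\dot\phi_t,\dot a_t)\,dt\bigr|=\sqrt{\Phi^2+\Delta a^2}$, where $\Phi$ is the net lifted phase change. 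The lifted chord stays in the open half-plane, since the segment has winding less than $\pi$, so $\Phi=\Delta\phi$. Equality in Minkowski forces $(\dot\phi_t,\dot a_t)$ to have constant direction.

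The main obstacle is this equality case. If $\Delta\phi\neq0$, the explicit formula shows $\dot\phi_t$ is nonconstant, so I need to rule out constant direction in that case. When $\Delta a=0$, the path's $\dot a_t$ changes sign while $\dot\phi_t$ keeps one sign, so the direction cannot be constant. In the general case I would argue that constant direction makes $(\phi_t,a_t)$ an affine segment in the $(\phi,a)$ plane. Its image under $(\phi,a)\mapsto ae^{i\phi}$ is then a spiral arc, which is a straight chord only when $\Delta\phi=0$. Conversely, $\Delta\phi=0$ makes the chord radial with $L_{\mathrm{aff}}=|\Delta a|=L_C$.
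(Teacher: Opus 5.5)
Your proposal is correct and follows the paper's route. You do a direct polar-form computation of the affine chord's amplitude and phase rate with small-angle expansions; your half-angle variables $\theta=\Delta\phi/2$ and $2t-1$ are only a change of variables from the paper's $\cos\Delta\phi$ and $2t(1-t)$. You then apply the triangle inequality for curves in the $(\phi,a)$-plane, and reduce the equality case to the polar image of a segment with $\Delta\phi\neq0$ being a spiral arc, not a chord. Your check that the chord's lifted phase change equals $\Delta\phi$ fills a step the paper takes for granted. The spiral-versus-chord claim you leave as ``I would argue'' is settled in the paper by the curvature computation $\operatorname{Im}\bigl(\overline{z'(\lambda)}\,z''(\lambda)\bigr)=\Delta\phi\bigl(2(\Delta a)^2+(\Delta\phi)^2(a_0+\lambda\,\Delta a)^2\bigr)\neq0$.
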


In cardiovascular signals, phase controls beat timing and amplitude controls morphology: the affine bridge therefore creates intermediate states with attenuated pulses and warped cycle progression, yielding supervision that is less physiologically plausible. The cylindrical geodesic avoids this detour, ensuring that the supervision in Eq.~\eqref{eq:fm-loss} reflects the intrinsic phase--amplitude interpolation rather than geometric artifacts of an ambient chord.

\subsection{Intrinsic Tangent Parameterization}
\label{subsec:intrinsic-parameterization}
If the model predicts an unrestricted vector in $\mathbb{R}^3$, numerical integration can push the state off the cylinder, accumulating manifold drift over successive steps. We address this with two complementary mechanisms. First, we parameterize the learned dynamics intrinsically. For a state $g=(\cos \phi, \sin \phi, a)$, the model outputs an angular velocity $\omega_\theta$ and an amplitude velocity $\dot{a}_\theta$, which are mapped to the ambient space via
\begin{equation}
    \hat{v}_\theta(g,t,x_0)
    =
    \begin{bmatrix}
        -\sin \phi \, \omega_\theta(g,t,x_0) \\
        \phantom{-}\cos \phi \, \omega_\theta(g,t,x_0) \\
        \dot{a}_\theta(g,t,x_0)
    \end{bmatrix}.
    \label{eq:intrinsic-pushforward}
\end{equation}
The first two coordinates are automatically tangent to $S^1$, so the predicted velocity has no radial component and the continuous-time flow of Eq.~\eqref{eq:intrinsic-pushforward} remains on the cylinder by construction. Under an explicit discrete solver, however, straight-line steps along a tangent direction leave the curved surface, producing off-manifold drift that is second-order in the step size. To eliminate this residual drift, we apply a second mechanism: after each solver substep we project the state back onto the cylinder by renormalizing the circular component to unit norm. Together, the tangent parameterization and the per-substep projection keep the numerical trajectory on the manifold throughout integration. This parameterization also mirrors the structure of the oracle target in Eq.~\eqref{eq:oracle-field}, where the angular and amplitude components are likewise decoupled, ensuring that the model capacity is devoted to learning the intrinsic phase and amplitude velocities rather than correcting off-manifold drift.

\section{Experiments}
\label{sec:experiments}

\subsection{Setup}
\label{subsec:experimental-setup}

\paragraph{Tasks and motivation.}
We evaluate cylindrical geodesic flow matching on two paired cardiovascular signal-translation tasks: sternum-to-finger PPG and clavicle-to-sternum SCG, each conditioned on the simultaneously recorded ECG. The targets are clinically canonical sites: finger PPG is the reference modality for pulse oximetry and vascular biomarker extraction, and sternal SCG is the validated location for cardiac mechanical timings such as the aortic-opening and aortic-closing intervals and the pre-ejection period. The sources, by contrast, are convenient wearable placements whose morphology departs substantially from these canonical sites. Recovering the target-site waveform from the source-site recording therefore preserves compatibility with established diagnostic pipelines while enabling unobtrusive continuous monitoring. This is a practically important regime for which paired translation is a natural formulation.

\paragraph{Data and protocol.}
Both datasets were collected at the authors' institution under separate IRB-approved protocols at $500$\,Hz and include baseline and stressor/recovery phases. The PPG cohort consists of 20 single-session subjects, segmented into sliding windows; the SCG cohort consists of 13 subjects across 24 sessions, segmented at the beat level. We use subject-disjoint $50/25/25$ train/validation/test splits repeated over four random seeds. PPG is evaluated zero-shot on held-out subjects with no adaptation. SCG, in contrast, exhibits substantial session-to-session variability driven by sensor placement, coupling, and body-habitus differences, so zero-shot transfer is not realistic and a small per-session calibration is the operationally relevant regime. For SCG, we adapt to each held-out session using a strictly causal $512$-beat prefix drawn from the earliest portion of the session (resting baseline and, where needed, the initial recovery phase) and evaluate on the chronological remainder, which spans subsequent stressor recoveries; this is stricter than baseline-only evaluation because adaptation is fixed early while the model must track later condition changes.

\paragraph{Metrics and baselines.}
We report Hilbert Transform Distance (HT Dist; joint amplitude–phase error in the analytic-signal domain), $L_2$ error (root-mean-square waveform error), and Dynamic Time Warping error (DTW; alignment-tolerant waveform mismatch); lower is better. All three are computed under both the standard HT and EHT. We compare four learned models sharing the same 1D U-Net backbone, optimizer, and training schedule: (i) a \emph{Hilbert naive path}, which interpolates linearly between source and target in the unwrapped phase-amplitude coordinates; (ii) a \emph{time-series affine path}, which interpolates linearly in the raw waveform domain; (iii) a \emph{direct supervised predictor}, which regresses the target endpoint from the source; and (iv) our \emph{cylindrical geodesic path}. The first two are treated as interpolation baselines that isolate the effect of bridge geometry within conditional flow matching; the third serves as a strong endpoint-prediction baseline. The source-to-target discrepancy without any learned translation is reported as a task-difficulty reference. Formal metric definitions, exact splits, hyperparameters, and compute environment are in Appendix~\ref{app:implementation}.

\subsection{Results}
\label{subsec:main-results}

\begin{table*}[t]
    \centering
    \small
    \setlength{\tabcolsep}{6pt}
    \begin{tabular}{@{}llccc@{}}
        \toprule
        Transform & Method / Reference & HT Dist $\downarrow$ & $L_2$ $\downarrow$ & DTW $\downarrow$ \\
        \midrule
        \multirow{5}{*}{HT} & Source Reference & $1.5389 \pm 0.0998$ & $1.2086 \pm 0.0512$ & $10.2730 \pm 0.5396$ \\
         & Hilbert Naive Path & $1.3752 \pm 0.1143$ & $0.9835 \pm 0.0668$ & $7.9733 \pm 0.5303$ \\
         & Direct Supervised Path & $0.9223 \pm 0.3275$ & $0.6867 \pm 0.2261$ & $5.0619 \pm 1.3302$ \\
         & Time Series Affine Path & $0.9073 \pm 0.2933$ & $0.6951 \pm 0.1869$ & $5.6048 \pm 1.2262$ \\
         & Cylindrical Geodesic Path & $\boldsymbol{0.8294 \pm 0.3619}$ & $\boldsymbol{0.5944 \pm 0.2310}$ & $\boldsymbol{4.7902 \pm 1.3007}$ \\
        \midrule
        \multirow{5}{*}{EHT} & Source Reference & $1.5634 \pm 0.0981$ & $1.1975 \pm 0.0545$ & $10.0716 \pm 0.5789$ \\
         & Hilbert Naive Path & $1.4726 \pm 0.0857$ & $1.0220 \pm 0.0452$ & $8.4820 \pm 0.7937$ \\
         & Direct Supervised Path & $\boldsymbol{0.9777 \pm 0.2739}$ & $0.7391 \pm 0.1345$ & $7.0076 \pm 1.3065$ \\
         & Time Series Affine Path & $1.0885 \pm 0.2708$ & $0.7591 \pm 0.1707$ & $6.1652 \pm 0.9439$ \\
         & Cylindrical Geodesic Path & $1.0110 \pm 0.2542$ & $\boldsymbol{0.7237 \pm 0.1514}$ & $\boldsymbol{5.9589 \pm 0.9057}$ \\
        \bottomrule
    \end{tabular}
    \caption{PPG zero-shot results (mean $\pm$ std over 4 runs, each with a different random seed and held-out test-subject split). Under HT, the cylindrical geodesic path performs best on all metrics. Under EHT, it performs best on $L_2$ and DTW, while direct supervision is best on HT Dist.}
    \label{tab:ppg-main-results}
\end{table*}

\begin{table*}[h]
    \centering
    \small
    \setlength{\tabcolsep}{6pt}
    \begin{tabular}{@{}llccc@{}}
        \toprule
        Transform & Method / Reference & HT Dist $\downarrow$ & $L_2$ $\downarrow$ & DTW $\downarrow$ \\
        \midrule
        \multirow{5}{*}{HT} & Source Reference & $1.4098 \pm 0.0628$ & $1.1932 \pm 0.0256$ & $5.8618 \pm 0.1735$ \\
         & Hilbert Naive Path & $1.7230 \pm 0.0478$ & $1.3031 \pm 0.0810$ & $6.3553 \pm 0.3431$ \\
         & Direct Supervised Path & $1.0419 \pm 0.0734$ & $0.7103 \pm 0.0293$ & $4.9158 \pm 0.1372$ \\
         & Time Series Affine Path & $0.9352 \pm 0.0181$ & $0.6344 \pm 0.0260$ & $4.4071 \pm 0.2501$ \\
         & Cylindrical Geodesic Path & $\boldsymbol{0.8936 \pm 0.0219}$ & $\boldsymbol{0.6181 \pm 0.0231}$ & $\boldsymbol{4.2618 \pm 0.2882}$ \\
        \midrule
        \multirow{5}{*}{EHT} & Source Reference & $1.4665 \pm 0.0546$ & $1.2111 \pm 0.0235$ & $6.0423 \pm 0.1496$ \\
         & Hilbert Naive Path & $1.7462 \pm 0.0146$ & $1.2976 \pm 0.0456$ & $6.2527 \pm 0.2739$ \\
         & Direct Supervised Path & $1.2437 \pm 0.0282$ & $0.9129 \pm 0.0377$ & $5.1282 \pm 0.2807$ \\
         & Time Series Affine Path & $1.1771 \pm 0.0335$ & $0.8551 \pm 0.0278$ & $5.0171 \pm 0.2362$ \\
         & Cylindrical Geodesic Path & $\boldsymbol{1.0565 \pm 0.0270}$ & $\boldsymbol{0.8059 \pm 0.0282}$ & $\boldsymbol{4.7527 \pm 0.2689}$ \\
        \bottomrule
    \end{tabular}
    \caption{SCG limited-support adaptation results (mean $\pm$ std over 4 runs, each with a different random seed and held-out test-subject split). The cylindrical geodesic path performs best on all three metrics under both HT and EHT.}
    \label{tab:scg-main-results}
\end{table*}

Tables~\ref{tab:ppg-main-results} and~\ref{tab:scg-main-results} summarize results for both datasets under both transform settings. On PPG zero-shot transfer (Table~\ref{tab:ppg-main-results}), the cylindrical geodesic path is the best on five of the six metric--transform cells, lowering HT Dist, $L_2$, and DTW by ${\sim}9$--$15\%$ over the time-series affine bridge under HT, and remaining the best on $L_2$ and DTW under EHT, where direct supervision is marginally better on HT Dist. On SCG limited-support adaptation (Table~\ref{tab:scg-main-results}), the advantage is uniform: the cylindrical geodesic path is the best on all six cells, with ${\sim}13\%$ reductions across all three metrics over direct supervision under HT and ${\sim}10$--$15\%$ reductions over both baselines under EHT. The Hilbert naive path, which trains on amplitude and phase without our cylindrical embedding, lags substantially on PPG and is worse than the source reference on SCG, indicating that direct phase--amplitude training on physiological signals is not viable without the geometric structure introduced by the cylindrical mapping. Inference uses a Heun solver~\cite{karras2022elucidating} with $100$ steps and a unit-norm projection of the circular component after each substep (Section~\ref{subsec:intrinsic-parameterization}). Support-size and solver-step ablations (Appendices~\ref{app:support-ablation} and~\ref{app:solver-ablation}) further show that these gains persist at $32$--$64$ support beats and ${\le}16$ solver steps, indicating that the geometric advantage of the cylindrical bridge is robust to substantially smaller adaptation budgets and coarser integration than used in the main tables.

\section{Discussion}
\label{sec:discussion}

To our knowledge, this is the first work that learns paired waveform translation directly in instantaneous phase--amplitude coordinates rather than as a sample-wise time-series regression. The Hilbert naive baseline shows the cost of ignoring this geometry: trained on the same $(\phi, a)$ representation but treating the circular coordinate as Euclidean, it lags on PPG and is worse than the identity-map source reference on SCG. Thus, phase--amplitude training is not, by itself, useful. Without explicit encoding of periodicity and positivity, the learned dynamics inherit the same discontinuities that the analytic representation was meant to remove. The cylindrical embedding, geodesic bridge, and intrinsic tangent parameterization are what make this representation work, and the gap between the Hilbert naive and cylindrical paths measures that.

Two predictions from Section~\ref{subsec:why-geodesic} are supported by the experiments. \emph{Bridge fidelity:} under the standard Hilbert transform, where phase and amplitude are most directly recoverable, the cylindrical path achieves the best performance on every metric--task pair, with margins over the affine bridge in line with the $O(\Delta\phi^4)$ amplitude and $O(\Delta\phi^6)$ frequency artifacts predicted by Theorem~\ref{thm:fidelity}. \emph{Supervisory density:} the advantage over direct endpoint supervision is largest on SCG limited-support adaptation and holds under reduced support sizes and coarser solvers (Appendices~\ref{app:support-ablation},~\ref{app:solver-ablation}). This matches the picture in which the geometric bridge supplies a full trajectory of geometry-consistent velocity targets per pair, which matters most when paired endpoints are scarce.

Our paper has three main limitations. First, the geodesic uses the flat product metric $d\phi^2 + da^2$; a log-amplitude or Fisher--Rao-style envelope metric may be a better fit for modalities with strong multiplicative envelope variability, and we have not mapped out which choice is preferable in which regime. Second, the analytic-signal representation has known weak points near low-amplitude segments, where the phase becomes ill-defined; the EHT mitigates but does not eliminate this, and our framework inherits whatever residual error remains at those points. Third, our evaluation covers two cardiovascular translation tasks at a single institution, so cross-site shift and sensor heterogeneity are not exercised. Nonetheless, the consistency of the geometric advantage across two distinct modalities, transform settings, and evaluation regimes (zero-shot and limited-support) suggests that the findings are not artifacts of a single favorable configuration. The construction itself extends to any product of a periodic manifold and a positive-amplitude manifold, which is the relevant setting for respiratory signals, gait cycles, and similar paired oscillatory translation problems.

By enabling waveform translation from convenient wearable sites to clinically validated locations, this work could expand access to continuous cardiovascular monitoring. However, clinical decisions should not be based solely on translated waveforms without independent validation, as silent translation failures could lead to missed or incorrect diagnoses. Additionally, physiological waveform data carry inherent re-identification risk and should be handled under appropriate data-protection protocols.

\section{Conclusion}
\label{sec:conclusion}

We introduced cylindrical geodesic flow matching, the first framework, to our knowledge, that performs paired cardiovascular waveform translation directly in instantaneous phase--amplitude coordinates. Embedding each sample on the cylinder $S^1 \times \mathbb{R}_{>0}$ gives a closed-form geodesic bridge with a constant-norm oracle field and an intrinsic tangent parameterization that keeps the learned dynamics on the manifold. We proved that this bridge removes the amplitude-shrinkage and frequency-warping artifacts that affine interpolation introduces between oscillatory endpoints and that it minimizes intrinsic length under the natural product metric. Empirically, the method matched or exceeded both interpolation baselines and direct supervised prediction on zero-shot PPG and limited-support SCG translation, reducing HT Dist, $L_2$, and DTW by up to ${\sim}15\%$ over the strongest competing baseline, and held its advantage under smaller adaptation budgets and coarser solvers. Together with the Hilbert naive baseline (which fails in the same coordinate system once the geometry is removed), these results point to bridge geometry as a key inductive bias for flow matching on physiological time series, and to periodic--positive product manifolds as a natural design space for paired translation of oscillatory signals.

\begin{ack}
Funding, IRB, and competing-interest disclosures will be added in the final version.
\end{ack}

\appendix

\bibliographystyle{plainnat}
\bibliography{references}

\section{Proofs of Theorems and Propositions}
\label{app:proofs}

\begin{theorem}[Smooth isometric cylindrical embedding and local representation equivalence]
\label{thm:embedding}
Let
\[
\zeta(\tau)=a(\tau)e^{i\phi(\tau)}\in\mathbb{C}\setminus\{0\}
\]
be a nonvanishing analytic representation of a waveform under a generic analytic-signal transform, so that its phase--amplitude coefficients define points
\[
([\phi(\tau)],a(\tau))\in S^1\times\mathbb{R}_{>0},
\qquad S^1\simeq \mathbb{R}/2\pi\mathbb{Z}.
\]
Define the cylindrical embedding
\[
\chi:S^1\times\mathbb{R}_{>0}\to\mathbb{R}^3,
\qquad
\chi([\phi],a):=(\cos\phi,\sin\phi,a),
\]
and equip \(S^1\times\mathbb{R}_{>0}\) with the product metric
\[
m=d\phi^2+da^2.
\]

Then:
\begin{enumerate}
\item \(\chi\) is a smooth isometric embedding of \(S^1\times\mathbb{R}_{>0}\) into \(\mathbb{R}^3\). In particular, it is injective, an immersion, and a homeomorphism onto its image.

\item Let \(I=[\alpha,\beta]\subset\mathbb{R}\) be any phase chart with \(\beta-\alpha<2\pi\), and let
\[
0<a_{\min}\le a_{\max}<\infty.
\]
Then the induced chart map
\[
\chi_I:I\times[a_{\min},a_{\max}]\to\mathbb{R}^3,
\qquad
\chi_I(\phi,a):=(\cos\phi,\sin\phi,a)
\]
is a diffeomorphism of manifolds with boundary onto its image and is bi-Lipschitz.

\item Consequently, on any bounded amplitude interval and any phase chart of width strictly less than \(2\pi\), modeling distributions over the coefficients \((\phi,a)\) is equivalent, up to a smooth bi-Lipschitz change of variables, to modeling distributions over the cylindrical embeddings
\[
g=\chi_I(\phi,a)\in\mathbb{R}^3.
\]
In particular, probability distributions and densities in one representation transfer to the other by the standard change-of-variables formula.
\end{enumerate}

\textit{Proof.}
For Item 1, under the standard realization
\[
S^1=\{(x,y)\in\mathbb{R}^2:x^2+y^2=1\},
\]
the image of \(\chi\) is the open cylinder
\[
\chi\bigl(S^1\times\mathbb{R}_{>0}\bigr)
=
\{(x,y,z)\in\mathbb{R}^3:x^2+y^2=1,\ z>0\},
\]
and the inverse is
\[
(x,y,z)\longmapsto \bigl((x,y),z\bigr).
\]
Thus \(\chi\) is injective and a homeomorphism onto its image. In a local angle coordinate \(\phi\),
\[
\partial_\phi \chi = (-\sin\phi,\cos\phi,0),
\qquad
\partial_a \chi = (0,0,1),
\]
so the Jacobian matrix is
\[
D\chi_I(\phi,a)=
\begin{pmatrix}
-\sin\phi & 0\\
\cos\phi & 0\\
0 & 1
\end{pmatrix},
\]
which has full column rank everywhere; hence \(\chi\) is an immersion. Moreover,
\[
\langle \partial_\phi\chi,\partial_\phi\chi\rangle = 1,
\qquad
\langle \partial_\phi\chi,\partial_a\chi\rangle = 0,
\qquad
\langle \partial_a\chi,\partial_a\chi\rangle = 1,
\]
so the pullback of the Euclidean metric is exactly
\[
\chi^*(dx^2+dy^2+dz^2)=d\phi^2+da^2=m.
\]
Therefore \(\chi\) is a smooth isometric embedding.

For Item 2, because \(\beta-\alpha<2\pi\), the arc map
\[
\phi\longmapsto (\cos\phi,\sin\phi)
\]
is injective on \(I\), and thus admits a smooth inverse branch \(\operatorname{Arg}_I\). Hence
\[
\chi_I^{-1}(x,y,z)=\bigl(\operatorname{Arg}_I(x+iy),z\bigr)
\]
is smooth on the image of \(\chi_I\), so \(\chi_I\) is a diffeomorphism onto its image.

Now let \(p_j=(\phi_j,a_j)\in I\times[a_{\min},a_{\max}]\) for \(j=1,2\). A direct computation gives
\[
\|\chi_I(p_1)-\chi_I(p_2)\|^2
=
4\sin^2\!\Bigl(\frac{\phi_1-\phi_2}{2}\Bigr)+(a_1-a_2)^2.
\]
Since \(|\phi_1-\phi_2|\le \beta-\alpha<2\pi\), there exists
\[
c_I:=\frac{2\sin((\beta-\alpha)/2)}{\beta-\alpha}>0
\]
such that
\[
c_I|\phi_1-\phi_2|
\le
2\Bigl|\sin\!\Bigl(\frac{\phi_1-\phi_2}{2}\Bigr)\Bigr|
\le
|\phi_1-\phi_2|.
\]
Substituting into the previous identity yields
\[
\min\{c_I,1\}^2\bigl(|\phi_1-\phi_2|^2+|a_1-a_2|^2\bigr)
\le
\|\chi_I(p_1)-\chi_I(p_2)\|^2
\le
|\phi_1-\phi_2|^2+|a_1-a_2|^2.
\]
Hence \(\chi_I\) is bi-Lipschitz.

Item 3 is immediate: on every bounded amplitude interval and every phase chart of width \(<2\pi\), the map \((\phi,a)\mapsto g=\chi_I(\phi,a)\) is a smooth bi-Lipschitz change of variables, so coefficient-space and embedding-space distributions are equivalent up to this coordinate transform. \hfill\(\square\)
\end{theorem}

\begin{proposition}[Cylindrical geodesic interpolation and oracle velocity]
\label{prop:geodesic}
Fix one sample \(s\in\{1,\dots,K\}\) from the uniformly sampled signals and define
\[
h_A(s)=([\phi_A(s)],a_A(s)),
\qquad
h_B(s)=([\phi_B(s)],a_B(s))\in S^1\times\mathbb{R}_{>0},
\]
where \(a_A(s),a_B(s)>0\). Let
\[
\Delta\phi(s):=\mathrm{wrap}\bigl(\phi_B(s)-\phi_A(s)\bigr)\in(-\pi,\pi],
\qquad
\Delta a(s):=a_B(s)-a_A(s),
\]
and equip \(S^1\times\mathbb{R}_{>0}\) with the product metric
\[
m=d\phi^2+da^2.
\]
For \(t\in[0,1]\), define
\[
\gamma_C(t;s):=\bigl([\phi_A(s)+t\,\Delta\phi(s)],\,a_A(s)+t\,\Delta a(s)\bigr)
\]
and its cylindrical embedding
\[
g_t(s):=\bigl(\cos\phi_t(s),\,\sin\phi_t(s),\,a_t(s)\bigr)\in\mathbb{R}^3,
\]
where
\[
\phi_t(s):=\phi_A(s)+t\,\Delta\phi(s),
\qquad
a_t(s):=a_A(s)+t\,\Delta a(s).
\]

Then:
\begin{enumerate}
\item \(\gamma_C(\cdot;s)\) is a constant-speed minimizing geodesic joining \(h_A(s)\) to \(h_B(s)\), with intrinsic speed and length
\[
\|\dot\gamma_C(t;s)\|_{m}
=
\sqrt{(\Delta\phi(s))^2+(\Delta a(s))^2},
\qquad
L_C(s)
=
\sqrt{(\Delta\phi(s))^2+(\Delta a(s))^2}.
\]
If \(|\Delta\phi(s)|<\pi\), this minimizing geodesic is unique.

\item The corresponding oracle velocity along the embedded path,
\[
v_t^\star(s):=\frac{d}{dt}g_t(s)
=
\begin{bmatrix}
-\sin\phi_t(s)\,\Delta\phi(s)\\[2pt]
\phantom{-}\cos\phi_t(s)\,\Delta\phi(s)\\[2pt]
\Delta a(s)
\end{bmatrix},
\]
has constant Euclidean norm
\[
\|v_t^\star(s)\|_2^2=(\Delta\phi(s))^2+(\Delta a(s))^2.
\]
\end{enumerate}

\textit{Proof.}
Under the product metric \(m=d\phi^2+da^2\), geodesics on \(S^1\times\mathbb{R}_{>0}\) are products of geodesics on the two factors. The phase component
\[
t\longmapsto [\phi_A(s)+t\,\Delta\phi(s)]
\]
is the minimizing geodesic on \(S^1\) associated with the principal wrapped increment \(\Delta\phi(s)\), and it is unique when \(|\Delta\phi(s)|<\pi\). Since \(\mathbb{R}_{>0}\) is convex, the amplitude component
\[
t\longmapsto a_A(s)+t\,\Delta a(s)
\]
lies in \(\mathbb{R}_{>0}\) for all \(t\in[0,1]\) and is the unique Euclidean geodesic joining \(a_A(s)\) to \(a_B(s)\). Hence \(\gamma_C(\cdot;s)\) is a constant-speed minimizing geodesic with velocity \((\Delta\phi(s),\Delta a(s))\), so
\[
\|\dot\gamma_C(t;s)\|_{m}
=
\sqrt{(\Delta\phi(s))^2+(\Delta a(s))^2}
\]
for all \(t\), and therefore
\[
L_C(s)
=
\sqrt{(\Delta\phi(s))^2+(\Delta a(s))^2}.
\]

Differentiating the embedded path gives the stated formula for \(v_t^\star(s)\). Using \(\sin^2\phi_t(s)+\cos^2\phi_t(s)=1\), we obtain
\[
\|v_t^\star(s)\|_2^2
=
\sin^2\phi_t(s)\,(\Delta\phi(s))^2
+
\cos^2\phi_t(s)\,(\Delta\phi(s))^2
+
(\Delta a(s))^2
=
(\Delta\phi(s))^2+(\Delta a(s))^2,
\]
which is independent of \(t\). \hfill\(\square\)
\end{proposition}

For the two comparison results below, we specialize Proposition~\ref{prop:geodesic} to a single sample and suppress the index \(s\). Thus
\[
\begin{aligned}
h_A&=([\phi_A],a_A),
&
h_B&=([\phi_B],a_B), \\
\Delta\phi&:=\mathrm{wrap}(\phi_B-\phi_A)\in(-\pi,\pi],
&
\Delta a&:=a_B-a_A,
\end{aligned}
\]
and \(\gamma_C\) denotes the corresponding cylindrical geodesic. When convenient, we identify \(\gamma_C\) with its complex realization
\[
\gamma_C(t)=(a_A+t\,\Delta a)e^{i(\phi_A+t\,\Delta\phi)}.
\]
For comparison, define the affine path in \(\mathbb{C}\) by
\[
w_A:=a_Ae^{i\phi_A},
\qquad
w_B:=a_Be^{i\phi_B},
\qquad
\gamma_{\mathrm{aff}}(t):=(1-t)w_A+t\,w_B=:w(t).
\]

\begin{theorem}[Amplitude and phase-rate fidelity of cylindrical vs.\ affine interpolation]
\label{thm:fidelity}
Using the single-sample notation above, assume \(a_A=a_B=a>0\), so that
\[
w_A=a e^{i\phi_A},
\qquad
w_B=a e^{i(\phi_A+\Delta\phi)},
\qquad
\gamma_C(t)=a\,e^{i(\phi_A+t\Delta\phi)}.
\]
For any nonvanishing path \(\gamma(t)=a_t e^{i\phi_t}\), define the amplitude distortion
\[
D_{\mathrm{amp}}(\gamma):=\int_0^1 (a_t-a)^2\,dt,
\]
and, whenever a continuous phase lift exists on \([0,1]\), define the interpolation-time phase-rate distortion
\[
D_{\mathrm{freq}}(\gamma):=\int_0^1 \bigl(\dot\phi_t-\Delta\phi\bigr)^2\,dt.
\]

Then:
\begin{enumerate}
\item For the cylindrical path \(\gamma_C\),
\[
D_{\mathrm{amp}}(\gamma_C)=0,
\qquad
D_{\mathrm{freq}}(\gamma_C)=0.
\]
Thus the amplitude remains exactly constant and the phase rate equals the desired increment \(\Delta\phi\) for all \(t\).

\item For the affine path \(\gamma_{\mathrm{aff}}\):
\begin{enumerate}
\item The amplitude is nonconstant whenever \(\Delta\phi\neq 0\), and
\[
a_t^{\mathrm{aff}}
=
a\sqrt{(1-t)^2+t^2+2t(1-t)\cos\Delta\phi}.
\]
Its minimum occurs at \(t=\tfrac12\), where
\[
a_{1/2}^{\mathrm{aff}}
=
a\,\bigl|\cos(\Delta\phi/2)\bigr|.
\]
Consequently,
\[
D_{\mathrm{amp}}(\gamma_{\mathrm{aff}})>0
\qquad\text{for all }\Delta\phi\neq 0.
\]
Moreover, for \(0<|\Delta\phi|<\pi\),
\begin{align}
D_{\mathrm{amp}}(\gamma_{\mathrm{aff}})
&=
a^{2}\!\left[
\frac{2+\cos\Delta\phi}{3}
-
\frac{1+\cos\Delta\phi}{\sqrt{2(1-\cos\Delta\phi)}}\,
\operatorname{asinh}\!\sqrt{\frac{1-\cos\Delta\phi}{1+\cos\Delta\phi}}
\right]
\label{eq:Damp-cos}
\\[4pt]
&=
a^{2}\!\left[
\frac{2+\cos\Delta\phi}{3}
-
\frac{\cos^{2}\!\left(\tfrac{\Delta\phi}{2}\right)}
{\bigl|\sin\!\left(\tfrac{\Delta\phi}{2}\right)\bigr|}\,
\operatorname{asinh}\!\Bigl(\bigl|\tan\!\left(\tfrac{\Delta\phi}{2}\right)\bigr|\Bigr)
\right],
\label{eq:Damp-halfangle}
\end{align}
with continuous extension
\[
D_{\mathrm{amp}}(\gamma_{\mathrm{aff}})=\frac13\,a^2
\qquad\text{at }|\Delta\phi|=\pi.
\]
As \(\Delta\phi\to 0\),
\[
D_{\mathrm{amp}}(\gamma_{\mathrm{aff}})
=
a^2\,\frac{(\Delta\phi)^4}{120}
+
O\!\bigl((\Delta\phi)^6\bigr).
\]

\item If \(0<|\Delta\phi|<\pi\), then the affine path remains in \(\mathbb{C}\setminus\{0\}\), its phase is well-defined, and its phase rate is
\[
\omega_t^{\mathrm{aff}}
=
\frac{\sin\Delta\phi}{(1-t)^2+t^2+2t(1-t)\cos\Delta\phi}.
\]
This function is nonconstant, so
\[
D_{\mathrm{freq}}(\gamma_{\mathrm{aff}})>0
\qquad\text{for all }0<|\Delta\phi|<\pi.
\]
Moreover,
\[
\omega_{1/2}^{\mathrm{aff}}
=
2\tan\!\Bigl(\frac{\Delta\phi}{2}\Bigr),
\]
so \(|\omega_{1/2}^{\mathrm{aff}}|\to\infty\) as \(|\Delta\phi|\uparrow\pi\), and
\[
D_{\mathrm{freq}}(\gamma_{\mathrm{aff}})
=
\frac{(\Delta\phi)^6}{180}
+
O\!\bigl((\Delta\phi)^8\bigr)
\qquad\text{as }\Delta\phi\to0.
\]
\end{enumerate}

\item In the boundary case \(|\Delta\phi|=\pi\), the affine midpoint amplitude vanishes,
\[
a_{1/2}^{\mathrm{aff}}=0,
\]
so the affine path leaves \(\mathbb{C}\setminus\{0\}\) and its phase rate is not defined on all of \([0,1]\).
\end{enumerate}

In particular, the cylindrical geodesic is exactly amplitude-faithful and phase-rate-faithful, whereas the affine path introduces strictly positive integrated amplitude distortion for every nonzero phase gap, and strictly positive integrated phase-rate distortion whenever the affine path stays away from the origin.

\textit{Proof.}
Under the standing equal-amplitude assumption,
\[
\gamma_C(t)=a\,e^{i(\phi_A+t\Delta\phi)},
\qquad
\gamma_{\mathrm{aff}}(t)
=
a e^{i\phi_A}\bigl[(1-t)+t\,e^{i\Delta\phi}\bigr].
\]

\paragraph{Cylindrical path.}
By construction, the cylindrical path has constant amplitude \(a_t\equiv a\) and phase \(\phi_t=\phi_A+t\Delta\phi\). Hence
\[
D_{\mathrm{amp}}(\gamma_C)=0,
\qquad
D_{\mathrm{freq}}(\gamma_C)=0.
\]

\paragraph{Affine-path amplitude.}
Write
\[
\gamma_{\mathrm{aff}}(t)=a e^{i\phi_A}\xi_t,
\qquad
\xi_t:=(1-t)+t\,e^{i\Delta\phi}.
\]
Therefore
\begin{equation}
a_t^{\mathrm{aff}}
=
a|\xi_t|
=
a\sqrt{(1-t)^2+t^2+2t(1-t)\cos\Delta\phi}.
\label{eq:env_aff}
\end{equation}
Using \(1-\cos\Delta\phi=2\sin^2(\Delta\phi/2)\), this can be rewritten as
\[
\left(\frac{a_t^{\mathrm{aff}}}{a}\right)^2
=
\cos^2\!\Bigl(\frac{\Delta\phi}{2}\Bigr)
+
4\Bigl(t-\frac12\Bigr)^2\sin^2\!\Bigl(\frac{\Delta\phi}{2}\Bigr).
\]
Hence the minimum occurs at \(t=\tfrac12\), giving
\begin{equation}
a_{1/2}^{\mathrm{aff}}
=
a\,\bigl|\cos(\Delta\phi/2)\bigr|.
\label{eq:env_min}
\end{equation}
As \(|\Delta\phi|\uparrow\pi\), this midpoint amplitude approaches zero, reflecting destructive self-interference; by contrast, the cylindrical path keeps \(a_t\equiv a\), so no spurious fading is introduced.
If \(\Delta\phi\neq 0\), then \(a_{1/2}^{\mathrm{aff}}<a\), so the integrand
\[
(a_t^{\mathrm{aff}}-a)^2
\]
is continuous and not identically zero; therefore
\[
D_{\mathrm{amp}}(\gamma_{\mathrm{aff}})>0.
\]

To obtain the closed form, set
\[
c:=\cos\Delta\phi,
\qquad
u(t):=2t(1-t).
\]
Then
\[
\frac{D_{\mathrm{amp}}(\gamma_{\mathrm{aff}})}{a^2}
=
\int_0^1\Bigl(\sqrt{1+(c-1)u(t)}-1\Bigr)^2dt.
\]
Since
\[
\Bigl(\sqrt{1+(c-1)u}-1\Bigr)^2
=
(c-1)u+2-2\sqrt{1+(c-1)u},
\]
we obtain
\[
\frac{D_{\mathrm{amp}}(\gamma_{\mathrm{aff}})}{a^2}
=
\frac{c-1}{3}+2-2I,
\qquad
I:=\int_0^1\sqrt{1+(c-1)u(t)}\,dt.
\]
Now
\[
u(t)=2t(1-t)=\frac12-2\Bigl(t-\frac12\Bigr)^2.
\]
Define
\[
A:=\frac{1+c}{2}=\cos^2\!\frac{\Delta\phi}{2},
\qquad
\kappa:=2(1-c)=4\sin^2\!\frac{\Delta\phi}{2}.
\]
With \(s:=t-\tfrac12\in[-\tfrac12,\tfrac12]\),
\[
I
=
\int_{-1/2}^{1/2}\sqrt{A+\kappa s^2}\,ds
=
2\int_0^{1/2}\sqrt{A+\kappa s^2}\,ds.
\]
Using
\[
\int \sqrt{A+\kappa s^2}\,ds
=
\frac{s}{2}\sqrt{A+\kappa s^2}
+
\frac{A}{2\sqrt{\kappa}}
\operatorname{asinh}\!\left(\frac{\sqrt{\kappa}\,s}{\sqrt{A}}\right),
\]
and \(A+\kappa/4=1\), we get
\[
I
=
\frac12
+
\frac{A}{\sqrt{\kappa}}
\operatorname{asinh}\!\left(\frac{\sqrt{\kappa}}{2\sqrt{A}}\right).
\]
Substituting back yields
\[
\frac{D_{\mathrm{amp}}(\gamma_{\mathrm{aff}})}{a^2}
=
\frac{2+c}{3}
-
\frac{2A}{\sqrt{\kappa}}
\operatorname{asinh}\!\left(\frac{\sqrt{\kappa}}{2\sqrt{A}}\right),
\]
which is exactly \eqref{eq:Damp-cos}--\eqref{eq:Damp-halfangle}. At \(|\Delta\phi|=\pi\), the continuous limit gives
\[
D_{\mathrm{amp}}(\gamma_{\mathrm{aff}})=\frac13\,a^2.
\]

For the small-angle expansion, let \(x:=\Delta\phi\). Using
\[
\cos x = 1-\frac{x^2}{2}+\frac{x^4}{24}+O(x^6),
\qquad
\sqrt{1+\varepsilon}=1+\frac{\varepsilon}{2}-\frac{\varepsilon^2}{8}+O(\varepsilon^3),
\]
with \(\varepsilon(t):=(\cos x-1)u(t)=O(x^2)\), we have
\[
\bigl(\sqrt{1+\varepsilon}-1\bigr)^2
=
\frac{\varepsilon^2}{4}+O(\varepsilon^3)
=
\frac{[(\cos x-1)u(t)]^2}{4}+O(x^6).
\]
Since
\[
(\cos x-1)^2=\frac{x^4}{4}+O(x^6),
\]
it follows that
\[
\frac{D_{\mathrm{amp}}(\gamma_{\mathrm{aff}})}{a^2}
=
\frac{x^4}{16}\int_0^1 u(t)^2\,dt + O(x^6).
\]
Because
\[
u(t)^2 = 4\bigl(t^2-2t^3+t^4\bigr),
\qquad
\int_0^1 u(t)^2\,dt = \frac{2}{15},
\]
we obtain
\[
D_{\mathrm{amp}}(\gamma_{\mathrm{aff}})
=
a^2\,\frac{x^4}{120}+O(x^6)
=
a^2\,\frac{(\Delta\phi)^4}{120}+O\!\bigl((\Delta\phi)^6\bigr).
\]

\paragraph{Affine-path phase rate.}
Assume now \(0<|\Delta\phi|<\pi\), so that \(\gamma_{\mathrm{aff}}(t)\neq 0\) for all \(t\in[0,1]\). Write
\[
\xi_t = R(t)+iI(t),
\qquad
R(t)=(1-t)+t\cos\Delta\phi,
\qquad
I(t)=t\sin\Delta\phi.
\]
For a nonvanishing complex path \(z(t)=R(t)+iI(t)\),
\[
\frac{d}{dt}\arg z(t)=\frac{R\,I'-I\,R'}{R^2+I^2}.
\]
Since
\[
R'(t)=\cos\Delta\phi-1,
\qquad
I'(t)=\sin\Delta\phi,
\]
we obtain
\begin{equation}
\omega_t^{\mathrm{aff}}
=
\frac{\sin\Delta\phi}{(1-t)^2+t^2+2t(1-t)\cos\Delta\phi}.
\label{eq:inst_freq_aff}
\end{equation}
Because the denominator depends nontrivially on \(t\) whenever \(\Delta\phi\neq 0\), the function \(\omega_t^{\mathrm{aff}}\) is nonconstant. On the other hand, the total phase change is exactly \(\Delta\phi\), so
\[
\int_0^1 \omega_t^{\mathrm{aff}}\,dt=\Delta\phi.
\]
Hence \(\Delta\phi\) is the mean value of \(\omega_t^{\mathrm{aff}}\) on \([0,1]\), and since \(\omega_t^{\mathrm{aff}}\) is nonconstant, it follows that
\[
D_{\mathrm{freq}}(\gamma_{\mathrm{aff}})>0.
\]

At \(t=\tfrac12\), the denominator in \eqref{eq:inst_freq_aff} is
\[
\frac12(1+\cos\Delta\phi)=\cos^2\!\Bigl(\frac{\Delta\phi}{2}\Bigr),
\]
so
\[
\omega_{1/2}^{\mathrm{aff}}
=
\frac{\sin\Delta\phi}{\cos^2(\Delta\phi/2)}
=
2\tan\!\Bigl(\frac{\Delta\phi}{2}\Bigr),
\]
whose magnitude diverges as \(|\Delta\phi|\uparrow\pi\).

For the small-angle expansion, let \(x:=\Delta\phi\) and \(u(t):=2t(1-t)\). Then
\[
(1-t)^2+t^2+2t(1-t)\cos x
=
1+u(t)(\cos x-1).
\]
Using
\[
\sin x=x-\frac{x^3}{6}+O(x^5),
\qquad
\cos x-1=-\frac{x^2}{2}+O(x^4),
\]
we expand
\[
\frac{1}{1+u(\cos x-1)}
=
1-u(\cos x-1)+O(x^4)
=
1+\frac{u\,x^2}{2}+O(x^4).
\]
Therefore
\[
\omega_t^{\mathrm{aff}}
=
\Bigl(x-\frac{x^3}{6}+O(x^5)\Bigr)
\Bigl(1+\frac{u\,x^2}{2}+O(x^4)\Bigr)
=
x+x^3\Bigl(\frac{u}{2}-\frac16\Bigr)+O(x^5).
\]
Since \(u(t)=2t(1-t)\),
\[
\omega_t^{\mathrm{aff}}
=
x+x^3\Bigl(t-t^2-\frac16\Bigr)+O(x^5).
\]
Thus
\[
\omega_t^{\mathrm{aff}}-\Delta\phi
=
x^3\Bigl(t-t^2-\frac16\Bigr)+O(x^5),
\]
and squaring gives
\[
\bigl(\omega_t^{\mathrm{aff}}-\Delta\phi\bigr)^2
=
x^6\Bigl(t-t^2-\frac16\Bigr)^2+O(x^8).
\]
Since
\[
\int_0^1\Bigl(t-t^2-\frac16\Bigr)^2\,dt=\frac1{180},
\]
we conclude that
\begin{equation}
D_{\mathrm{freq}}(\gamma_{\mathrm{aff}})
=
\frac{\Delta\phi^6}{180}+O(\Delta\phi^8).
\label{eq:Dfreq_result}
\end{equation}

Finally, if \(|\Delta\phi|=\pi\), then
\[
\gamma_{\mathrm{aff}}(1/2)=0,
\]
so the affine path leaves \(\mathbb{C}\setminus\{0\}\), and its phase rate is not defined on all of \([0,1]\). \hfill\(\square\)
\end{theorem}

\begin{theorem}[Intrinsic path-length optimality of the cylindrical geodesic]
\label{thm:length}
Using the same single-sample notation, consider two endpoints
\[
([\phi_A],a_A),
\qquad
([\phi_B],a_B),
\qquad
\Delta\phi:=\mathrm{wrap}(\phi_B-\phi_A)\in(-\pi,\pi],
\qquad
\Delta a:=a_B-a_A,
\]
and equip \(S^1\times\mathbb{R}_{>0}\) with the product metric
\[
m=d\phi^2+da^2.
\]

Let \(L_C\) and \(L_{\mathrm{aff}}\) denote the intrinsic lengths of \(\gamma_C\) and of the intrinsic realization of \(\gamma_{\mathrm{aff}}\), whenever the latter is well-defined:
\[
L_C
=
\int_0^1 \sqrt{\dot\phi_C^2+\dot a_C^2}\,dt,
\qquad
L_{\mathrm{aff}}
=
\int_0^1 \sqrt{\dot\delta(t)^2+\dot\alpha(t)^2}\,dt.
\]

Then:
\begin{enumerate}
\item The cylindrical path has constant speed and minimal intrinsic length
\[
L_C=\sqrt{(\Delta\phi)^2+(\Delta a)^2}.
\]

\item If the affine chord avoids the origin on \([0,1]\), then
\[
L_{\mathrm{aff}}\ge L_C,
\]
with equality if and only if \(\Delta\phi=0\), i.e., the endpoints lie on the same complex ray and only the amplitude changes.

\item In the special case \(a_A=a_B=a\) and \(0<|\Delta\phi|<\pi\),
\[
\alpha(t)
=
a\sqrt{(1-t)^2+t^2+2t(1-t)\cos\Delta\phi},
\]
so
\[
\alpha(1/2)
=
a\bigl|\cos(\Delta\phi/2)\bigr|<a,
\]
and therefore
\[
L_{\mathrm{aff}}>L_C=|\Delta\phi|.
\]

\item In the boundary case \(|\Delta\phi|=\pi\), the affine chord passes through the origin at
\[
t_\ast=\frac{a_A}{a_A+a_B},
\]
so the induced intrinsic path is not defined on all of \([0,1]\), whereas
\[
L_C=\sqrt{\pi^2+(\Delta a)^2}
\]
remains finite.
\end{enumerate}

\textit{Proof.}
We measure lengths on \(S^1\times\mathbb{R}_{>0}\) with the product metric
\[
ds^2=d\phi^2+da^2.
\]
Thus, for any absolutely continuous curve \(\gamma(t)=(\phi(t),a(t))\),
\[
\|\dot\gamma(t)\|
=
\sqrt{\dot\phi(t)^2+\dot a(t)^2},
\qquad
\mathrm{Length}(\gamma)
=
\int_0^1 \|\dot\gamma(t)\|\,dt.
\]

For the cylindrical path,
\[
\phi_C(t)=\phi_A+t\,\Delta\phi,
\qquad
a_C(t)=a_A+t\,\Delta a,
\]
so
\[
\dot\phi_C=\Delta\phi,
\qquad
\dot a_C=\Delta a,
\]
and hence
\[
\|\dot\gamma_C\|
=
\sqrt{(\Delta\phi)^2+(\Delta a)^2}.
\]
Therefore
\[
L_C
=
\int_0^1 \|\dot\gamma_C\|\,dt
=
\sqrt{(\Delta\phi)^2+(\Delta a)^2},
\]
which is exactly the geodesic distance between the endpoints. This proves Item 1.

Now assume the affine chord avoids the origin on \([0,1]\). Then it admits an intrinsic realization
\[
\gamma_{\mathrm{aff}}(t)=(\delta(t),\alpha(t)),
\qquad
\delta(0)=\phi_A,
\qquad
\delta(1)=\phi_A+\Delta\phi,
\qquad
\alpha(t)=|w(t)|.
\]
Viewing \(\gamma_{\mathrm{aff}}\) as a curve in the Euclidean \((\phi,a)\)-plane, the triangle inequality for curves gives
\[
L_{\mathrm{aff}}
=
\int_0^1 |\dot\gamma_{\mathrm{aff}}(t)|\,dt
\ge
\left|\int_0^1 \dot\gamma_{\mathrm{aff}}(t)\,dt\right|
=
|\gamma_{\mathrm{aff}}(1)-\gamma_{\mathrm{aff}}(0)|
=
\sqrt{(\Delta\phi)^2+(\Delta a)^2}
=
L_C.
\]
Hence \(L_{\mathrm{aff}}\ge L_C\).

Equality in the triangle inequality holds only if the image of \(\gamma_{\mathrm{aff}}\) in the \((\phi,a)\)-plane is the straight segment joining its endpoints, up to monotone reparametrization. In that case, its complex realization would have the form
\[
z(\lambda)=\bigl(a_A+\lambda\,\Delta a\bigr)e^{i(\phi_A+\lambda\,\Delta\phi)},
\qquad
\lambda\in[0,1].
\]
Differentiating,
\[
z'(\lambda)
=
e^{i(\phi_A+\lambda\,\Delta\phi)}
\Bigl(\Delta a+i\,\Delta\phi\,(a_A+\lambda\,\Delta a)\Bigr),
\]
\[
z''(\lambda)
=
e^{i(\phi_A+\lambda\,\Delta\phi)}
\Bigl(2i\,\Delta a\,\Delta\phi-(\Delta\phi)^2(a_A+\lambda\,\Delta a)\Bigr).
\]
A direct computation yields
\[
\operatorname{Im}\!\bigl(\overline{z'(\lambda)}\,z''(\lambda)\bigr)
=
\Delta\phi\Bigl(2(\Delta a)^2+(\Delta\phi)^2(a_A+\lambda\,\Delta a)^2\Bigr).
\]
If \(\Delta\phi\neq 0\), this quantity is nonzero for every \(\lambda\in[0,1]\), so the image \(z([0,1])\) is not a Euclidean line segment in \(\mathbb{C}\). But the affine path \(w(t)=(1-t)w_A+t\,w_B\) is, by definition, a Euclidean line segment. Hence equality is impossible when \(\Delta\phi\neq 0\). If \(\Delta\phi=0\), then
\[
w(t)=e^{i\phi_A}(a_A+t\,\Delta a)
\]
stays on the same complex ray, so
\[
\gamma_{\mathrm{aff}}(t)=(\phi_A,a_A+t\,\Delta a),
\qquad
L_{\mathrm{aff}}=|\Delta a|=L_C.
\]
This proves Item 2.

For Item 3, when \(a_A=a_B=a\),
\[
\alpha(t)=|w(t)|
=
a\sqrt{(1-t)^2+t^2+2t(1-t)\cos\Delta\phi},
\]
so
\[
\alpha(1/2)=a\bigl|\cos(\Delta\phi/2)\bigr|<a
\qquad\text{for}\qquad 0<|\Delta\phi|<\pi.
\]
Thus the affine intrinsic path is not the geodesic
\[
t\longmapsto ([\phi_A+t\,\Delta\phi],a),
\]
and Item 2 implies
\[
L_{\mathrm{aff}}>L_C=|\Delta\phi|.
\]

For Item 4, if \(|\Delta\phi|=\pi\), then
\[
w_B=-a_B e^{i\phi_A}
\]
and therefore
\[
w(t)=e^{i\phi_A}\bigl((1-t)a_A-t\,a_B\bigr),
\]
which vanishes at
\[
t_\ast=\frac{a_A}{a_A+a_B}\in(0,1).
\]
Hence the affine path leaves \(\mathbb{C}\setminus\{0\}\) and does not define a curve in \(S^1\times\mathbb{R}_{>0}\) on all of \([0,1]\). The cylindrical path remains well-defined and has length
\[
L_C=\sqrt{\pi^2+(\Delta a)^2}.
\]
\hfill\(\square\)
\end{theorem}

\section{Supplementary Experimental Details}
\label{app:implementation}

\subsection{Dataset Details and Preprocessing}
\label{app:dataset-details}

Here, we describe the acquisition hardware, cohort composition, synchronization, segmentation, artifact rejection, and normalization used to construct the PPG and SCG translation datasets.

\paragraph{Hardware.}
Both datasets were collected with the Cardiotag wearable patch~\cite{chan2021enabling} and a synchronized Biopac MP150 system. The Cardiotag is a compact patch that integrates single-lead electrocardiogram (ECG), tri-axial seismocardiogram (SCG), and multi-wavelength photoplethysmography (PPG). The ECG analog front end is the ADS1291 (Texas Instruments, Dallas, TX, USA); the PPG analog front end is the MAX86170 (Maxim Integrated, San Jose, CA, USA), driving an SFH7016 multi-LED chip (OSRAM, Munich, Germany) and two VEMD8080 photodiodes (Vishay Semiconductors, Heilbronn, Germany). Each photodiode channel (PD1, PD2) supports green ($526\,\mathrm{nm}$), red ($660\,\mathrm{nm}$), and infrared ($950\,\mathrm{nm}$) illumination; in this work we use only the green channel of each diode. From the tri-axial SCG accelerometer we use the dorsoventral axis, which carries most of the cardiac mechanical energy at the chest. Patch ECG and PPG are sampled at $500\,\mathrm{Hz}$ and $67\,\mathrm{Hz}$, respectively. Biopac ECG and PPG are sampled at $2000\,\mathrm{Hz}$; finger PPG is acquired with a Nonin 8000AA-1 clip (Nonin, Plymouth, MN, USA) using a $663\,\mathrm{nm}$ red light source.

\paragraph{Cohort and protocols.}
The PPG cohort consists of 20 subjects, each recorded in a single session. The SCG cohort consists of 13 subjects and 24 total sessions, with most participants completing two recordings on separate days. Both datasets were acquired at the authors’ institution under separate IRB-approved protocols, and all participants provided written informed consent and were compensated for their time.

Each SCG session followed a standardized protocol comprising a 5-minute resting baseline followed by three stressor blocks. Each block included a 2-minute challenge, a 5-minute recovery recording, and a 2-minute survey. The challenges were administered in a fixed order: mental arithmetic, cold pressor, and sub-maximal exercise, resulting in a total session duration of approximately 32 minutes. For modeling, we retained only the four measurement phases: baseline, mental-arithmetic recovery, cold-pressor recovery, and exercise recovery. Challenge and survey intervals were excluded.

Each PPG session began with a 5-minute baseline recording, followed by alternating respiratory perturbations and recovery periods. The perturbations consisted of 1-minute deep breathing, resistive breathing, standing deep breathing, and standing straw breathing tasks, each followed by a 2-minute recovery measurement. The protocol also included 3-minute postural transitions—sit-to-stand, stand-to-sit, sit-to-supine, supine-to-legs-raised, legs-raised-to-standing, standing-to-supine, and supine-to-sitting—which we treat as natural perturbations of pulse morphology rather than explicit stressors.

\paragraph{Signal cleaning.}
Patch ECG sampled natively at \(500\,\mathrm{Hz}\) is retained at its original rate, patch PPG sampled at \(67\,\mathrm{Hz}\) is upsampled, and Biopac ECG and PPG sampled at \(2000\,\mathrm{Hz}\) are downsampled. Before resampling, each signal is denoised with a zero-phase fifth-order Butterworth bandpass filter using modality-specific passbands: \([1,30]\,\mathrm{Hz}\) for ECG, \([1,4]\,\mathrm{Hz}\) for PPG, and \([2,39]\,\mathrm{Hz}\) for SCG. These cutoff frequencies lie well below the \(250\,\mathrm{Hz}\) Nyquist frequency of the common grid and are chosen to retain the dominant cardiac morphology while attenuating baseline wander, out-of-band respiratory variation, and high-frequency sensor noise. 

\paragraph{R-peak detection.}
R-peaks are detected independently from the ECG signal of each device. In the paired-device dataset, the device-specific R-peak series are used to align recordings across devices. In the SCG dataset, R-peaks provide the cardiac-cycle anchors used to select SCG samples.

Detection uses a NeuroKit2-anchored consensus procedure for each ECG channel~\cite{makowski2021neurokit2}. The default NeuroKit2 detector first produces an initial candidate sequence, which is then refined using detections from Martínez \emph{et al.}\ ~\cite{martinez2004wavelet} and Kalidas and Tamil~\cite{kalidas2017real}. Within a sliding 100-beat window with stride 10, each NeuroKit2 RR interval is compared with the local median RR interval. Intervals shorter than \(0.7\times\) the local median are treated as candidate extra detections and redundant peaks are removed. Intervals longer than \(1.4\times\) the local median are treated as candidate missed beats; in these cases, a peak from either reference detector is inserted if it lies within \(\pm 50\,\mathrm{ms}\) of the expected beat location. Inserted peaks are sorted with the original sequence, and duplicate detections are removed. Thus, the final R-peak sequence for each device is not the unanimous intersection of the three detectors, but a NeuroKit2-based sequence corrected for missed and spurious beats using either auxiliary detector as supporting evidence.

\paragraph{Cross-device synchronization.}
The Cardiotag patches and the Biopac system run on independent clocks; moreover, when multiple Cardiotag patches are used, the patches themselves are not sample-synchronous. Consequently, sample indices across any pair of streams do not in general correspond to the same wall-clock instants, and all streams must be temporally aligned before cross-device modeling. We perform this alignment pairwise in two stages. First, we estimate a session-level lag from heart-rate variability: per-beat RR intervals from each device’s ECG are interpolated onto a common \(4\,\mathrm{Hz}\) grid, and the resulting tachograms are cross-correlated using an FFT-based implementation. Because RR-interval traces are largely device-independent and provide a distinctive per-session physiological signature, this step recovers the relative offset between each device pair to sub-second resolution without requiring waveform-level alignment. Second, we refine the alignment to sample resolution using sliding-window matched filtering on the bandpass-filtered ECG waveforms. In \(300\)-second windows with a \(20\)-second stride, ECG pairs are cross-correlated with the search constrained to a small neighborhood of the previous estimate, and per-window outliers are rejected using an IQR criterion. This refinement is applied continuously throughout the recording rather than only once at session onset: a fresh lag estimate is produced for every window, and each device timeline is shifted accordingly relative to a chosen reference stream. The resulting time-varying alignment compensates for the slow clock drift that accumulates among independently running devices over a \(\sim\)30-minute session, which would not be corrected by a single rigid offset. After temporal synchronization, all channels are represented on a common \(500\,\mathrm{Hz}\) grid. 

\paragraph{Beat and window extraction.}
For the SCG task, we use the R-peaks first to define a short reference segment for artifact rejection. Specifically, letting \(R\) denote the sample index of an R-peak, we extract a 350-sample segment, corresponding to \(0.7\,\mathrm{s}\) at \(500\,\mathrm{Hz}\), beginning at the R-peak. This reference segment is used only for quality control and is not used as input to the model. Within this segment, we compute the variance of the absolute deviation of the SCG signal. Beats are grouped by experimental phase, and within each phase we discard beats whose variance lies outside the 5th--95th percentile range of the phase-specific distribution. Because SCG is recorded at both the Sternum and Clavicle, a beat is retained only if it passes this criterion for both channels. Phase-specific thresholding is essential because exercise-recovery beats have intrinsically higher SCG energy than baseline beats; applying a single global threshold would therefore reject a disproportionate fraction of valid exercise data. After this filtering step, \(30{,}494\) SCG beats remain.

After artifact rejection, we construct the SCG model inputs. Although the rejection criterion is computed on the 350-sample reference segment, the actual beat provided to the model is a longer 1024-sample window, corresponding to \(2.048\,\mathrm{s}\). This window is centered at the midpoint of the reference segment:
\[
c_{\mathrm{beat}} = R + 175,
\qquad
\mathrm{window} = [c_{\mathrm{beat}}-512,\ c_{\mathrm{beat}}+512).
\]
As a result, the model input is intentionally asymmetric with respect to the R-peak, extending approximately \(674\,\mathrm{ms}\) before and \(1374\,\mathrm{ms}\) after it. This placement captures the AO/AC-bearing portion of the SCG waveform while retaining pre-R context, including the P-wave and the tail of the preceding beat, as well as the post-AC decay. The extended context also provides padding that helps suppress boundary artifacts from FFT-based analytic-signal computation.

For the PPG task, we segment the signal into fixed-length windows of 4096 samples, corresponding to \(8.192\,\mathrm{s}\), with a stride of 2048 samples, yielding 50\% overlap between adjacent windows. This produces \(11{,}483\) PPG windows. We do not apply the SCG variance-based rejection procedure to PPG, as the modality is substantially cleaner and the longer window duration reduces the influence of short-lived motion artifacts.

\paragraph{Normalization.}
Each beat or window is z-score normalized per channel along the time axis using the numerically-stabilized estimator $(x-\mu)/(\sigma+\varepsilon)$ with $\varepsilon=10^{-12}$. Conditioning channels are normalized with a NaN-aware variant so that occasional dropouts in auxiliary signals do not contaminate the statistics of the primary channels.

\subsection{Exact Split Definitions and Leakage Safeguards}
\label{app:split-details}

\paragraph{Subject-disjoint repeated splits.}
Both datasets are partitioned at the subject level, not the session level: all sessions from a given subject are assigned to exactly one of the train, validation, or test splits. Thus, for the SCG task, the held-out unit is a subject, including both Day~1 and Day~2 sessions when available, rather than an individual recording session. We repeat this subject-level partitioning over four random seeds, $\{6,10,11,13\}$, producing four independent train/validation/test assignments. For PPG, the split ratio is 50\%/25\%/25\%, corresponding to 10/5/5 subjects. For SCG, the same ratio gives approximately 7/3/3 subjects. The validation and test subjects for each dataset and random seed are listed in Table~\ref{tab:subject-splits}; the training set is the complement of the listed subjects. All reported metrics are summarized as means and standard deviations across these four subject-disjoint splits.

\begin{table*}[b]
\centering
\small
\caption{
Subject-disjoint validation and test assignments used for repeated evaluation. 
Each row corresponds to one random seed; the training set is the complement of the listed validation and test subjects. 
All sessions from a subject are assigned to the same split, ensuring no subject overlap across train, validation, and test sets.
}
\label{tab:subject-splits}
\begin{tabular}{cccc}
\toprule
Dataset & Seed & Validation subjects & Test subjects \\
\midrule
\multirow{4}{*}{PPG}
& 6
& S004, S006, S013, S015, S016
& S005, S007, S014, S018, S019 \\
& 10
& S003, S010, S011, S014, S018
& S006, S007, S013, S016, S024 \\
& 11
& S002, S004, S011, S015, S018
& S001, S005, S006, S013, S017 \\
& 13
& S002, S014, S016, S019, S020
& S001, S007, S017, S018, S024 \\
\midrule
\multirow{4}{*}{SCG}
& 6
& S005, S012, S017
& S004, S007, S009 \\
& 10
& S003, S004, S017
& S006, S007, S012 \\
& 11
& S006, S009, S015
& S002, S004, S007 \\
& 13
& S003, S007, S018
& S002, S011, S012 \\
\bottomrule
\end{tabular}
\end{table*}

\paragraph{Validation set role.}
The validation split is held subject-disjoint from both the train and test splits and is used exclusively for model selection. During training, we periodically evaluate the model on the validation split and retain the checkpoint with the best validation metric; this single checkpoint is then evaluated once on the held-out test split to produce the reported numbers. No test sample is observed at any point during training, hyperparameter selection, or checkpoint selection.

\paragraph{Causal support-set construction (SCG).}
For the few-shot adaptation experiments on SCG, the support set for each held-out test session is constructed as a strictly causal chronological prefix of that session's beats: the first $S$ beats in temporal order are designated as support and the remaining beats form the query set, where $S$ is specified in App.~\ref{app:support-ablation}. Because beats are gated to the four measurement phases and stored in chronological order within a session, this prefix spans the resting baseline and, for sessions whose baseline yields fewer than $S$ retained beats after artifact rejection, extends into the subsequent mental-arithmetic recovery phase. No randomization or stratification is applied within the session; the support is exactly the temporally earliest portion of the recording, mirroring the deployment setting in which a short enrollment segment is acquired before the model is asked to track the subject through subsequent activity.

\paragraph{Leakage safeguards.}
Our construction prevents data leakage across splits and between support and query sets in three key ways:
\begin{enumerate}
    \renewcommand{\labelenumi}{(\roman{enumi})}
    \item \emph{Subject disjointness.} Every session of a subject is assigned to exactly one split, so neither the trained weights nor the adaptation support has ever observed a test subject's morphology.

    \item \emph{Per-window normalization.} The z-score normalization is computed per window or per beat using only that sample's own statistics, never aggregated across sessions or splits, so no global moment can carry test information into training.

    \item \emph{Causal supports.} Because the SCG adaptation support is the chronological prefix of a held-out session and the query is its strict suffix, the model cannot adapt on a sample temporally posterior to one it is later evaluated on, eliminating the within-session leakage that random support sampling would introduce.
\end{enumerate}

\subsection{Analytic Transforms and Metric Definitions}
\label{app:metric-details}
Both instantiations of the generic analytic transform $\mathcal{A}$ share a common FFT-based core. Given a real signal $x \in \mathbb{R}^T$, the standard Hilbert transform (HT) constructs the analytic signal
\[
z = \mathcal{F}^{-1}\!\bigl(H \odot \mathcal{F}(x)\bigr),
\]
where $\mathcal{F}$ denotes the DFT along the time axis and $H \in \mathbb{R}^T$ is the one-sided Hilbert filter with $H_0 = H_{T/2} = 1$, $H_k = 2$ for $1 \le k < T/2$, and $H_k = 0$ otherwise (even $T$; the odd-$T$ case is analogous). The instantaneous amplitude envelope is $a = |z|$ (clamped below at $\varepsilon = 10^{-12}$ to avoid division by zero) and the instantaneous phase is $\phi = \mathrm{unwrap}(\arg z)$, where the unwrap operator removes $2\pi$ discontinuities by cumulative correction of successive phase differences. A scalar mean angular frequency is estimated as $\hat{\omega} = (\phi_{T} - \phi_1) / ((T-1)\Delta t)$.

The extended Hilbert transform (EHT)~\cite{matsuki2023extended} refines the HT phase estimate without performing a full empirical mode decomposition. Starting from the HT quantities above, it (i) detrends the phase to obtain $u_H = \phi_H - \hat{\omega}\,t$, (ii) takes the DFT of $u_H$ and extracts the positive-frequency half-spectrum, (iii) estimates the dominant IMF order $m = \mathrm{round}(\hat{\omega}\,T\,\Delta t / 2\pi)$ and applies a piecewise spectral correction that reconstructs the positive-frequency component, (iv) inverts back to the time domain and smooths the detrended phase with a moving-average filter of span $\max(5, \lfloor 0.02\,T \rfloor)$, (v) flags outliers via median absolute deviation (MAD threshold 3.5) and replaces them by linear interpolation, and (vi) clamps per-sample phase velocity to $\pm 0.45\;\mathrm{rad/sample}$ to suppress residual jumps. The refined phase is $\hat{\phi} = \mathrm{unwrap}(\hat{\omega}\,t + u_{\mathrm{smooth}})$. The amplitude envelope is the same as the standard HT.

\paragraph{Center cropping.}
Because the DFT treats its input as periodic, both HT and EHT can produce boundary artifacts near the first and last samples of a finite window. To mitigate this, the analytic transform is computed on a window that is intentionally wider than the segment ultimately used by the model. For the PPG task, the dataset window is 4096 samples ($8.192\,\mathrm{s}$) while the model operates on the center 2048 samples ($4.096\,\mathrm{s}$); after computing HT or EHT on the full 4096-sample window, both amplitude and phase are center-cropped to 2048 samples, discarding 1024 samples from each end. The same strategy is applied to SCG: the transform is computed on the full 1024-sample beat window, which extends well beyond the physiologically active portion of the beat (Section~\ref{app:dataset-details}), and the resulting amplitude and phase are then center-cropped to the model size, so that the retained segment is insulated from boundary effects on both sides.

\paragraph{Hilbert transform distance.}
HT Dist measures the mean per-beat intrinsic distance in the analytic-signal domain. For a predicted and target signal pair, the standard Hilbert transform is applied to both to obtain amplitude--phase pairs $(a_{\mathrm{pred}}, \phi_{\mathrm{pred}})$ and $(a_{\mathrm{tgt}}, \phi_{\mathrm{tgt}})$. The pointwise intrinsic distance is
\[
d(s) = \sqrt{\bigl(a_{\mathrm{pred}}(s) - a_{\mathrm{tgt}}(s)\bigr)^2 + \bigl(\mathrm{circ}(\phi_{\mathrm{pred}}(s),\,\phi_{\mathrm{tgt}}(s))\bigr)^2},
\]
where $\mathrm{circ}(\phi_1, \phi_2) = \bigl|(\phi_1 - \phi_2 + \pi) \bmod 2\pi - \pi\bigr|$ maps the signed phase difference to the minimum circular distance in $[0, \pi]$. This is precisely the product metric on $S^1 \times \mathbb{R}_{>0}$ introduced in Section~\ref{sec:background}, evaluated pointwise between the predicted and target analytic representations. HT Dist is the sample-averaged intrinsic distance per beat, further averaged across the batch. Because the metric operates in the analytic domain, it penalizes phase distortions that alter waveform morphology even when they leave the time-domain amplitude largely unchanged, complementing purely time-domain criteria.

\paragraph{Root-mean-square error.}
$L_2$ is the per-beat root-mean-square error between predicted and target waveforms in the time domain, averaged across the test batch. Both signals are channel-wise z-score normalized prior to comparison as described in Section~\ref{app:dataset-details}.

\paragraph{Dynamic time warping distance.}
Dynamic time warping (DTW) \cite{sakoe1978dynamic} measures the cost of an optimal nonlinear alignment between two sequences, thereby quantifying waveform dissimilarity that is tolerant to local temporal shifts and stretches. Unlike $L_2$, which assumes strict sample-to-sample correspondence, DTW finds a monotonic warping path through the pairwise cost matrix that minimizes the cumulative Euclidean distance, making it sensitive to shape differences that point-wise metrics may underestimate when slight timing variations are present \cite{berndt1994using,keogh2005exact}. Because the quadratic complexity of the standard algorithm is prohibitive at full sampling rate, we first reduce the temporal resolution by a factor of four using non-overlapping mean pooling before computing per-beat DTW distances. The reported values are the mean and standard deviation across the test batch.

\subsection{Model and Training Details}
\label{app:model-details}

\paragraph{Backbone architecture.}
All learned methods use the same encoder-decoder architecture with skip connections \cite{ronneberger2015unet}, adapted to one-dimensional sequences. Each network comprises convolutional downsampling and upsampling stages with residual connections across three or four resolution levels, a bottleneck module at the coarsest scale, sinusoidal time embeddings \cite{ho2020ddpm} for the scalar flow time $t$, and $5\%$ dropout on the concatenated input. The supervised predictor uses the same architecture, receiving only the source waveform and ECG conditioning and producing a single-pass endpoint prediction.

The four methods differ in their state representation, which determines input and output dimensionality. All variants use four resolution levels with per-level feature widths $(32,32,64,64)$. The time-series affine model receives three channels: the current state, the source waveform $x_0$, and the ECG, and produces one output channel. The supervised model receives two channels: the source waveform $x_0$ and the ECG, and produces one output channel. Hilbert-naive models receive the amplitude--phase representation of all three signals (six channels total) and produce two output channels. Cylindrical geodesic models receive the cylindrical embedding of all three signals (nine channels total) and produce two intrinsic output channels ($\omega_\theta$, $\dot{a}_\theta$) mapped to the ambient tangent vector $(-\sin\phi\,\omega_\theta,\cos\phi\,\omega_\theta,\dot{a}_\theta)$ before loss evaluation. All variants have approximately $1.0\mathrm{M}$ trainable parameters; differences across methods are under $0.2\%$ and arise solely from the varying input and output channel counts.

\paragraph{Training objective and optimization.}
Flow-matching models draw $t\sim\mathrm{Uniform}(0,1)$ independently per sample and minimize the mean-squared error between the predicted and oracle velocity fields (Eq.~\eqref{eq:fm-loss}); the supervised model minimizes the mean-squared reconstruction error at the endpoint. All models are optimized with AdamW \cite{loshchilov2019adamw} using $\beta=(0.9,0.99)$, weight decay $10^{-2}$, and gradient-norm clipping at $1.0$. The learning-rate schedule applies linear warmup over the first $1\%$ of optimizer steps (starting at $1\%$ of the peak rate), followed by cosine decay to $10^{-3}$ times the peak rate. To prevent the model from collapsing to the source during conditioning, we perturb the $x_0$ input with additive Gaussian noise ($\sigma=0.05$) during training only. Checkpoints are evaluated against held-out validation subjects; the final model is selected by validation HT Dist.

\paragraph{Task-specific settings.}
Table~\ref{tab:training-hparams} summarizes the per-task hyperparameters. For PPG zero-shot transfer, the analytic transform is applied to $4096$-sample windows and the model operates on the center $2048$ samples (App.~\ref{app:metric-details}). For SCG pretraining, the transform is applied to $1024$-sample beat windows and the model operates on the center $512$ samples. Both tasks use the split seeds $\{6,10,11,13\}$ defined in Table~\ref{tab:subject-splits}.

\begin{table}[h]
    \centering
    \small
    \caption{Per-task training hyperparameters. PPG models are evaluated zero-shot; SCG models are pretrained then adapted per held-out session. Pretraining uses the validation-selected checkpoint; adaptation runs to completion and uses the final checkpoint (no validation during adaptation).}
    \label{tab:training-hparams}
    \begin{tabular}{lcccc}
        \toprule
        Task & Batch size & Peak LR & Epochs & Val.\ every \\
        \midrule
        PPG zero-shot & 512 & $3\times10^{-4}$ & 20{,}000 & 5 \\
        SCG pretraining & 4096 & $3\times10^{-4}$ & 20{,}000 & 5 \\
        SCG adaptation & 128 & $1\times10^{-4}$ & 50 & --- \\
        \bottomrule
    \end{tabular}
\end{table}

\paragraph{Compute environment.}
Models were trained on two NVIDIA RTX~5090 GPUs (32\,GB VRAM each; peak allocation 7.7\,GB per GPU) with CUDA~13.0, PyTorch~2.8, and Python~3.13. Approximate wall-clock training times per run are: supervised predictor $\approx$50\,min; all three flow-matching models (time-series affine, Hilbert-naive, and cylindrical geodesic) each $\approx$4.3\,h.

\subsection{Support-Size Ablation}
\label{app:support-ablation}
We probe sensitivity to the size of the per-session support set used during SCG limited-support adaptation. For each held-out session, we vary the support budget over $\{16, 32, 64, 128, 256, 512\}$ beats drawn from the temporally earliest portion of the recording and re-evaluate the trained cylindrical geodesic model on the chronological remainder of the session, leaving architecture, optimizer, and training schedule fixed.

\begin{figure}[h]
  \centering
  \includegraphics[width=\linewidth]{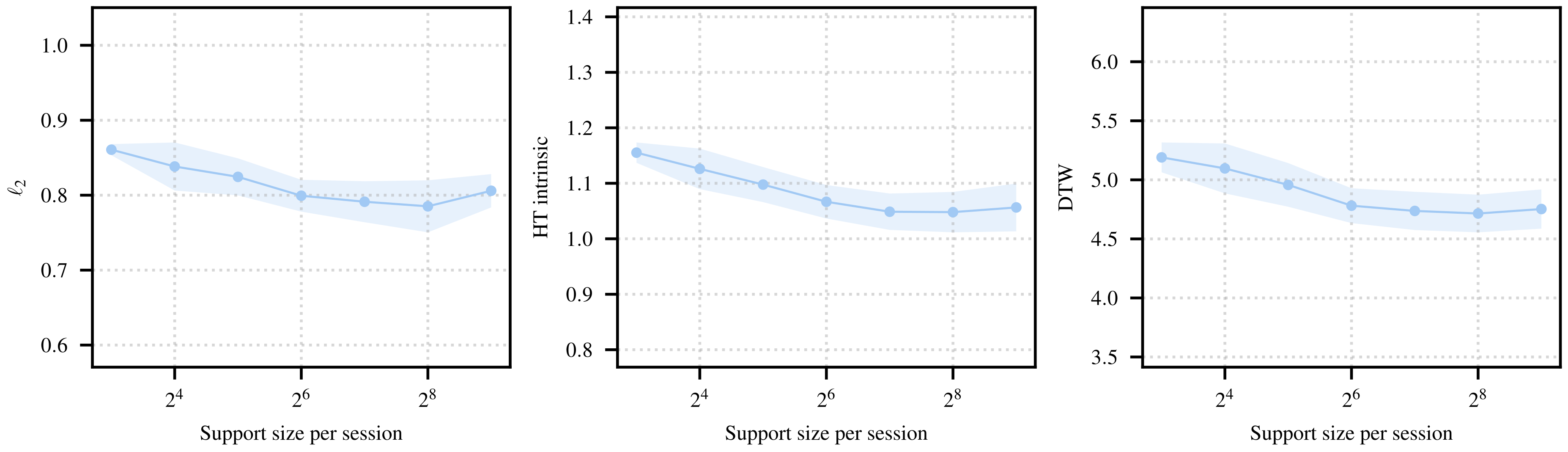}
  \caption{Support-size ablation on SCG. Mean and $\pm 1$ standard deviation of $L_2$, HT intrinsic distance, and DTW are plotted against the support-set size per held-out session, aggregated across subjects and seeds.}
  \label{fig:support-ablation}
\end{figure}

Figure~\ref{fig:support-ablation} shows that the steepest improvements occur between $16$ and $128$ support beats and that gains beyond $256$ beats are within the reported variability. The 512-beat operating point used in the main text therefore lies in a diminishing-returns regime, so the headline numbers are not driven by an unusually generous support budget; conversely, useful adaptation is retained at $32$--$64$ support beats, indicating that the cylindrical geodesic path does not rely on long calibration windows.

\subsection{Solver-Step Ablation}
\label{app:solver-ablation}
We examine sensitivity to the number of steps used by the projected Heun solver at inference time. Fixing the trained cylindrical geodesic model, we re-evaluate it with solver budgets in $\{4, 8, 16, 32, 64, 100\}$ steps.

\begin{figure}[h]
  \centering
  \includegraphics[width=\linewidth]{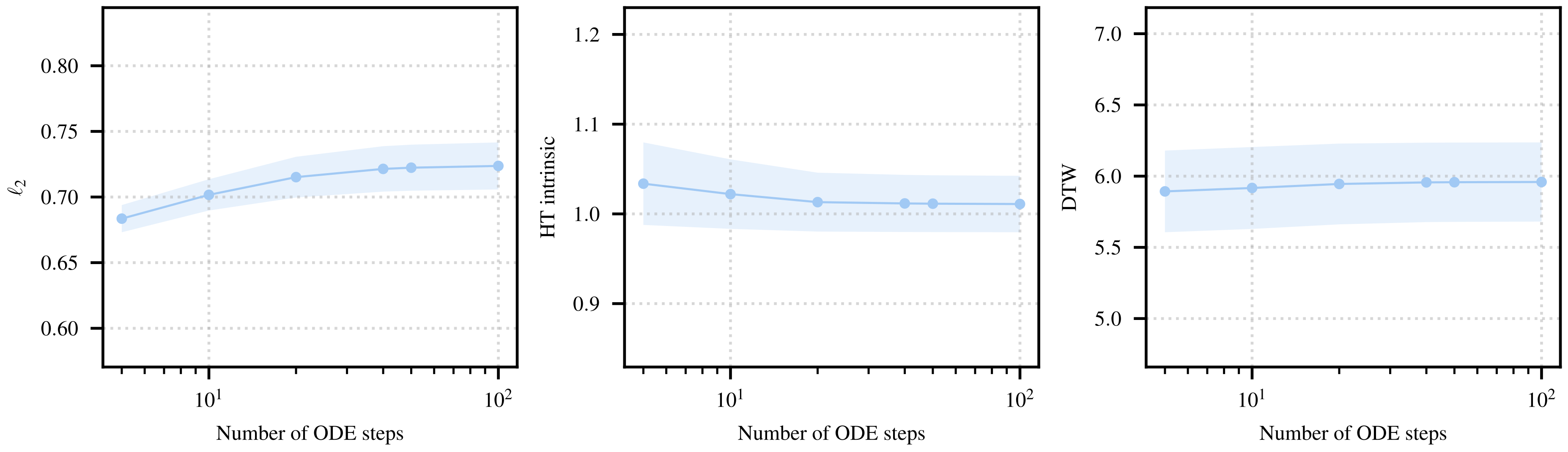}
  \caption{Solver-step ablation for cylindrical geodesic flow matching. Mean and $\pm 1$ standard deviation of $L_2$, HT intrinsic distance, and DTW are plotted against the number of ODE integration steps, aggregated across subjects and seeds.}
  \label{fig:steps-ablation}
\end{figure}

Figure~\ref{fig:steps-ablation} shows a short transient between $4$ and $\sim$16 steps after which all three metrics fall within the variability of the $100$-step reference; HT intrinsic distance and DTW are nearly flat, while $L_2$ improves slightly off the lowest step count before stabilizing. This is consistent with the analytic structure of the bridge: the oracle path is the closed-form constant-speed cylindrical geodesic rather than a learned curve with significant curvature, so faithful integration does not require fine temporal discretization. Inference budgets can therefore be reduced by roughly an order of magnitude without measurable degradation.

\subsection{Qualitative Examples}
\label{app:qualitative}
We provide side-by-side waveform comparisons on representative held-out examples from each dataset. Each figure shows the source waveform with the target in the leftmost panel, and overlays the prediction of each of the four learned models: the direct supervised predictor, the time-series affine bridge, the Hilbert naive bridge, and the cylindrical geodesic bridge. The predictions are overlaid on the same target in the remaining panels. Curves are color-coded throughout as initialization (\emph{Init}, gray), ground truth (\emph{Target}, green), and prediction (\emph{Pred}, orange).

\begin{figure*}[h]
    \centering
    \includegraphics[width=\textwidth]{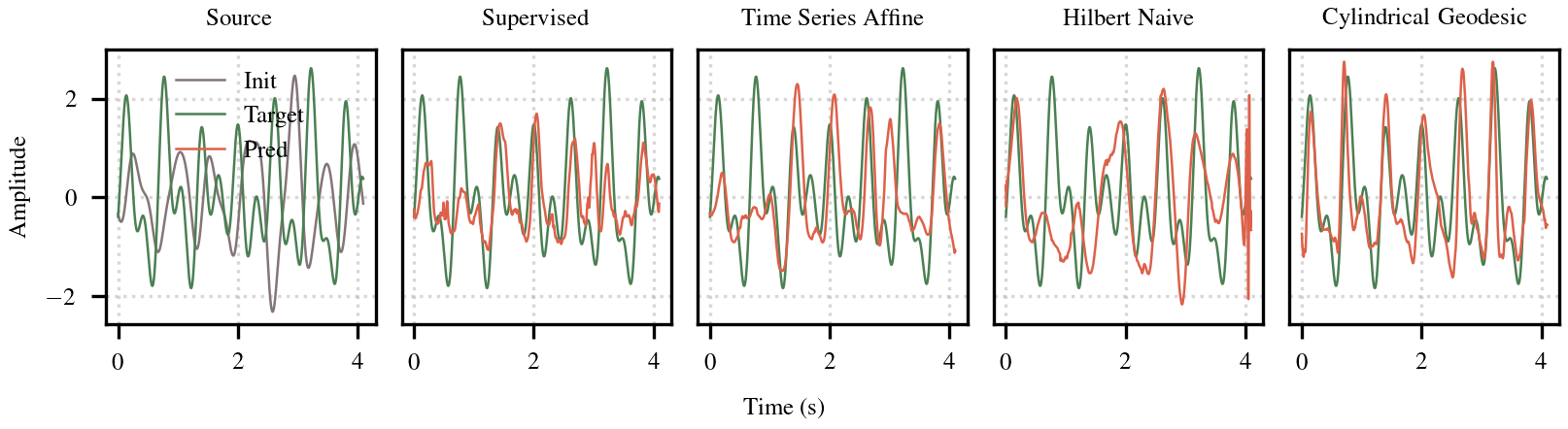}
    \caption{Qualitative comparison on a representative PPG zero-shot transfer example (4.096\,s window, channel-wise z-scored).}
    \label{fig:qualitative-ppg}
\end{figure*}

\begin{figure*}[h]
    \centering
    \includegraphics[width=\textwidth]{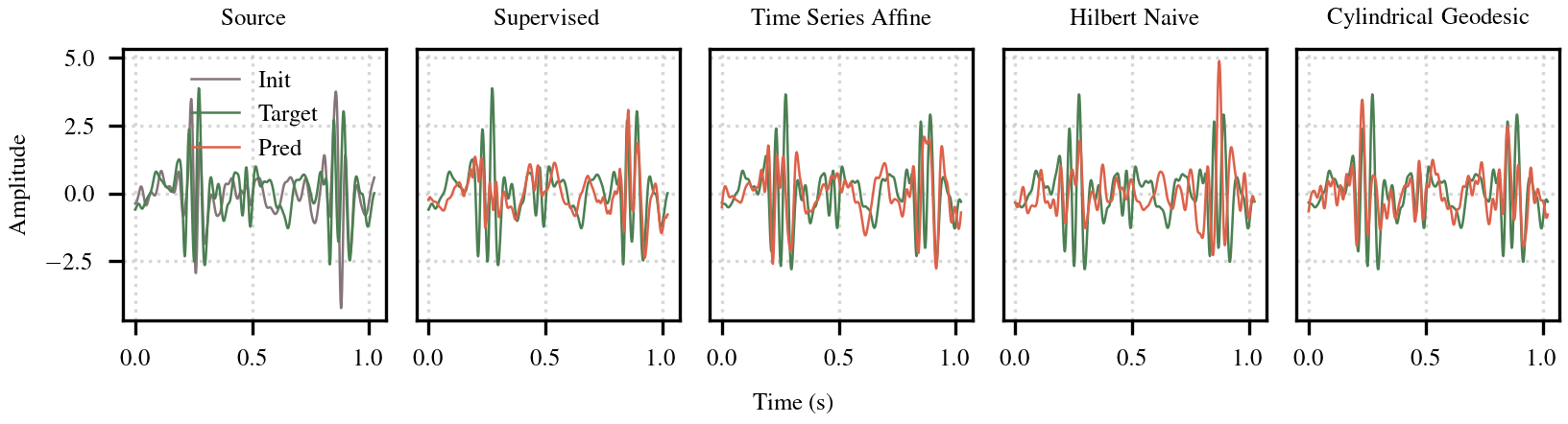}
    \caption{Qualitative comparison on a representative SCG limited-support adaptation example (1.024\,s beat window, channel-wise z-scored).}
    \label{fig:qualitative-scg}
\end{figure*}

The two examples illustrate complementary failure modes of the baseline bridges. On PPG (Figure~\ref{fig:qualitative-ppg}), where the source-to-target gap manifests primarily as morphology and timing differences in low-frequency oscillations, the cylindrical geodesic prediction follows successive systolic peak heights and zero crossings closely, whereas the Hilbert naive bridge exhibits a visible boundary artifact at the right edge of its panel and the time-series affine bridge attenuates the dominant peaks. On SCG (Figure~\ref{fig:qualitative-scg}), where the target contains sharp, locally concentrated peaks separated by quieter intervals, the cylindrical geodesic path preserves both the location and the amplitude of these peaks, while the affine and Hilbert naive bridges under- or over-shoot them and the supervised predictor produces a smoother envelope that softens the rapid transients.

\newpage
\section*{NeurIPS Paper Checklist}
 

\begin{enumerate}

\item {\bf Claims}
    \item[] Question: Do the main claims made in the abstract and introduction accurately reflect the paper's contributions and scope?
    \item[] Answer: \answerYes{} 
    \item[] Justification: The four contributions listed in the introduction are supported by Sections~\ref{subsec:cylindrical-representation}--\ref{subsec:intrinsic-parameterization}, Appendix~\ref{app:proofs}, and Tables~\ref{tab:ppg-main-results}--\ref{tab:scg-main-results}.
    \item[] Guidelines:
    \begin{itemize}
        \item The answer \answerNA{} means that the abstract and introduction do not include the claims made in the paper.
        \item The abstract and/or introduction should clearly state the claims made, including the contributions made in the paper and important assumptions and limitations. A \answerNo{} or \answerNA{} answer to this question will not be perceived well by the reviewers. 
        \item The claims made should match theoretical and experimental results, and reflect how much the results can be expected to generalize to other settings. 
        \item It is fine to include aspirational goals as motivation as long as it is clear that these goals are not attained by the paper. 
    \end{itemize}

\item {\bf Limitations}
    \item[] Question: Does the paper discuss the limitations of the work performed by the authors?
    \item[] Answer: \answerYes{} 
    \item[] Justification: Three limitations are discussed in Section~\ref{sec:discussion}: metric choice, analytic-signal artifacts near low-amplitude segments, and single-institution evaluation.
    \item[] Guidelines:
    \begin{itemize}
        \item The answer \answerNA{} means that the paper has no limitation while the answer \answerNo{} means that the paper has limitations, but those are not discussed in the paper. 
        \item The authors are encouraged to create a separate ``Limitations'' section in their paper.
        \item The paper should point out any strong assumptions and how robust the results are to violations of these assumptions (e.g., independence assumptions, noiseless settings, model well-specification, asymptotic approximations only holding locally). The authors should reflect on how these assumptions might be violated in practice and what the implications would be.
        \item The authors should reflect on the scope of the claims made, e.g., if the approach was only tested on a few datasets or with a few runs. In general, empirical results often depend on implicit assumptions, which should be articulated.
        \item The authors should reflect on the factors that influence the performance of the approach. For example, a facial recognition algorithm may perform poorly when image resolution is low or images are taken in low lighting. Or a speech-to-text system might not be used reliably to provide closed captions for online lectures because it fails to handle technical jargon.
        \item The authors should discuss the computational efficiency of the proposed algorithms and how they scale with dataset size.
        \item If applicable, the authors should discuss possible limitations of their approach to address problems of privacy and fairness.
        \item While the authors might fear that complete honesty about limitations might be used by reviewers as grounds for rejection, a worse outcome might be that reviewers discover limitations that aren't acknowledged in the paper. The authors should use their best judgment and recognize that individual actions in favor of transparency play an important role in developing norms that preserve the integrity of the community. Reviewers will be specifically instructed to not penalize honesty concerning limitations.
    \end{itemize}

\item {\bf Theory assumptions and proofs}
    \item[] Question: For each theoretical result, does the paper provide the full set of assumptions and a complete (and correct) proof?
    \item[] Answer: \answerYes{} 
    \item[] Justification: Theorem~\ref{thm:embedding}, Proposition~\ref{prop:geodesic}, and Theorems~\ref{thm:fidelity}--\ref{thm:length} state all assumptions explicitly and include complete proofs in Appendix~\ref{app:proofs}.
    \item[] Guidelines:
    \begin{itemize}
        \item The answer \answerNA{} means that the paper does not include theoretical results. 
        \item All the theorems, formulas, and proofs in the paper should be numbered and cross-referenced.
        \item All assumptions should be clearly stated or referenced in the statement of any theorems.
        \item The proofs can either appear in the main paper or the supplemental material, but if they appear in the supplemental material, the authors are encouraged to provide a short proof sketch to provide intuition. 
        \item Inversely, any informal proof provided in the core of the paper should be complemented by formal proofs provided in appendix or supplemental material.
        \item Theorems and Lemmas that the proof relies upon should be properly referenced. 
    \end{itemize}

    \item {\bf Experimental result reproducibility}
    \item[] Question: Does the paper fully disclose all the information needed to reproduce the main experimental results of the paper to the extent that it affects the main claims and/or conclusions of the paper (regardless of whether the code and data are provided or not)?
    \item[] Answer: \answerYes{} 
    \item[] Justification: Appendices~\ref{app:dataset-details}--\ref{app:model-details} specify architecture, optimizer, hyperparameters, exact subject splits (Table~\ref{tab:subject-splits}), and compute environment.
    \item[] Guidelines:
    \begin{itemize}
        \item The answer \answerNA{} means that the paper does not include experiments.
        \item If the paper includes experiments, a \answerNo{} answer to this question will not be perceived well by the reviewers: Making the paper reproducible is important, regardless of whether the code and data are provided or not.
        \item If the contribution is a dataset and\slash or model, the authors should describe the steps taken to make their results reproducible or verifiable. 
        \item Depending on the contribution, reproducibility can be accomplished in various ways. For example, if the contribution is a novel architecture, describing the architecture fully might suffice, or if the contribution is a specific model and empirical evaluation, it may be necessary to either make it possible for others to replicate the model with the same dataset, or provide access to the model. In general. releasing code and data is often one good way to accomplish this, but reproducibility can also be provided via detailed instructions for how to replicate the results, access to a hosted model (e.g., in the case of a large language model), releasing of a model checkpoint, or other means that are appropriate to the research performed.
        \item While NeurIPS does not require releasing code, the conference does require all submissions to provide some reasonable avenue for reproducibility, which may depend on the nature of the contribution. For example
        \begin{enumerate}
            \item If the contribution is primarily a new algorithm, the paper should make it clear how to reproduce that algorithm.
            \item If the contribution is primarily a new model architecture, the paper should describe the architecture clearly and fully.
            \item If the contribution is a new model (e.g., a large language model), then there should either be a way to access this model for reproducing the results or a way to reproduce the model (e.g., with an open-source dataset or instructions for how to construct the dataset).
            \item We recognize that reproducibility may be tricky in some cases, in which case authors are welcome to describe the particular way they provide for reproducibility. In the case of closed-source models, it may be that access to the model is limited in some way (e.g., to registered users), but it should be possible for other researchers to have some path to reproducing or verifying the results.
        \end{enumerate}
    \end{itemize}

\item {\bf Open access to data and code}
    \item[] Question: Does the paper provide open access to the data and code, with sufficient instructions to faithfully reproduce the main experimental results, as described in supplemental material?
    \item[] Answer: \answerNo{} 
    \item[] Justification: The datasets were collected under IRB-approved protocols and contain protected health information; code will be released upon acceptance.
    \item[] Guidelines:
    \begin{itemize}
        \item The answer \answerNA{} means that paper does not include experiments requiring code.
        \item Please see the NeurIPS code and data submission guidelines (\url{https://neurips.cc/public/guides/CodeSubmissionPolicy}) for more details.
        \item While we encourage the release of code and data, we understand that this might not be possible, so \answerNo{} is an acceptable answer. Papers cannot be rejected simply for not including code, unless this is central to the contribution (e.g., for a new open-source benchmark).
        \item The instructions should contain the exact command and environment needed to run to reproduce the results. See the NeurIPS code and data submission guidelines (\url{https://neurips.cc/public/guides/CodeSubmissionPolicy}) for more details.
        \item The authors should provide instructions on data access and preparation, including how to access the raw data, preprocessed data, intermediate data, and generated data, etc.
        \item The authors should provide scripts to reproduce all experimental results for the new proposed method and baselines. If only a subset of experiments are reproducible, they should state which ones are omitted from the script and why.
        \item At submission time, to preserve anonymity, the authors should release anonymized versions (if applicable).
        \item Providing as much information as possible in supplemental material (appended to the paper) is recommended, but including URLs to data and code is permitted.
    \end{itemize}

\item {\bf Experimental setting/details}
    \item[] Question: Does the paper specify all the training and test details (e.g., data splits, hyperparameters, how they were chosen, type of optimizer) necessary to understand the results?
    \item[] Answer: \answerYes{} 
    \item[] Justification: Section~\ref{subsec:experimental-setup} summarizes tasks, metrics, and baselines; Appendices~\ref{app:dataset-details}--\ref{app:model-details} provide full preprocessing, split, and training details.
    \item[] Guidelines:
    \begin{itemize}
        \item The answer \answerNA{} means that the paper does not include experiments.
        \item The experimental setting should be presented in the core of the paper to a level of detail that is necessary to appreciate the results and make sense of them.
        \item The full details can be provided either with the code, in appendix, or as supplemental material.
    \end{itemize}

\item {\bf Experiment statistical significance}
    \item[] Question: Does the paper report error bars suitably and correctly defined or other appropriate information about the statistical significance of the experiments?
    \item[] Answer: \answerYes{} 
    \item[] Justification: All results report mean $\pm$ standard deviation over four subject-disjoint random splits (Tables~\ref{tab:ppg-main-results}--\ref{tab:scg-main-results}).
    \item[] Guidelines:
    \begin{itemize}
        \item The answer \answerNA{} means that the paper does not include experiments.
        \item The authors should answer \answerYes{} if the results are accompanied by error bars, confidence intervals, or statistical significance tests, at least for the experiments that support the main claims of the paper.
        \item The factors of variability that the error bars are capturing should be clearly stated (for example, train/test split, initialization, random drawing of some parameter, or overall run with given experimental conditions).
        \item The method for calculating the error bars should be explained (closed form formula, call to a library function, bootstrap, etc.)
        \item The assumptions made should be given (e.g., Normally distributed errors).
        \item It should be clear whether the error bar is the standard deviation or the standard error of the mean.
        \item It is OK to report 1-sigma error bars, but one should state it. The authors should preferably report a 2-sigma error bar than state that they have a 96\% CI, if the hypothesis of Normality of errors is not verified.
        \item For asymmetric distributions, the authors should be careful not to show in tables or figures symmetric error bars that would yield results that are out of range (e.g., negative error rates).
        \item If error bars are reported in tables or plots, the authors should explain in the text how they were calculated and reference the corresponding figures or tables in the text.
    \end{itemize}

\item {\bf Experiments compute resources}
    \item[] Question: For each experiment, does the paper provide sufficient information on the computer resources (type of compute workers, memory, time of execution) needed to reproduce the experiments?
    \item[] Answer: \answerYes{} 
    \item[] Justification: Appendix~\ref{app:model-details} reports GPU type, VRAM, and per-run wall-clock times.
    \item[] Guidelines:
    \begin{itemize}
        \item The answer \answerNA{} means that the paper does not include experiments.
        \item The paper should indicate the type of compute workers CPU or GPU, internal cluster, or cloud provider, including relevant memory and storage.
        \item The paper should provide the amount of compute required for each of the individual experimental runs as well as estimate the total compute. 
        \item The paper should disclose whether the full research project required more compute than the experiments reported in the paper (e.g., preliminary or failed experiments that didn't make it into the paper). 
    \end{itemize}
    
\item {\bf Code of ethics}
    \item[] Question: Does the research conducted in the paper conform, in every respect, with the NeurIPS Code of Ethics \url{https://neurips.cc/public/EthicsGuidelines}?
    \item[] Answer: \answerYes{} 
    \item[] Justification: All data were collected under IRB-approved protocols with informed consent.
    \item[] Guidelines:
    \begin{itemize}
        \item The answer \answerNA{} means that the authors have not reviewed the NeurIPS Code of Ethics.
        \item If the authors answer \answerNo, they should explain the special circumstances that require a deviation from the Code of Ethics.
        \item The authors should make sure to preserve anonymity (e.g., if there is a special consideration due to laws or regulations in their jurisdiction).
    \end{itemize}

\item {\bf Broader impacts}
    \item[] Question: Does the paper discuss both potential positive societal impacts and negative societal impacts of the work performed?
    \item[] Answer: \answerYes{} 
    \item[] Justification: A broader impact paragraph in Section~\ref{sec:discussion} discusses both positive (expanded wearable monitoring) and negative (silent translation failures, re-identification risk) societal impacts.
    \item[] Guidelines:
    \begin{itemize}
        \item The answer \answerNA{} means that there is no societal impact of the work performed.
        \item If the authors answer \answerNA{} or \answerNo, they should explain why their work has no societal impact or why the paper does not address societal impact.
        \item Examples of negative societal impacts include potential malicious or unintended uses (e.g., disinformation, generating fake profiles, surveillance), fairness considerations (e.g., deployment of technologies that could make decisions that unfairly impact specific groups), privacy considerations, and security considerations.
        \item The conference expects that many papers will be foundational research and not tied to particular applications, let alone deployments. However, if there is a direct path to any negative applications, the authors should point it out. For example, it is legitimate to point out that an improvement in the quality of generative models could be used to generate Deepfakes for disinformation. On the other hand, it is not needed to point out that a generic algorithm for optimizing neural networks could enable people to train models that generate Deepfakes faster.
        \item The authors should consider possible harms that could arise when the technology is being used as intended and functioning correctly, harms that could arise when the technology is being used as intended but gives incorrect results, and harms following from (intentional or unintentional) misuse of the technology.
        \item If there are negative societal impacts, the authors could also discuss possible mitigation strategies (e.g., gated release of models, providing defenses in addition to attacks, mechanisms for monitoring misuse, mechanisms to monitor how a system learns from feedback over time, improving the efficiency and accessibility of ML).
    \end{itemize}
    
\item {\bf Safeguards}
    \item[] Question: Does the paper describe safeguards that have been put in place for responsible release of data or models that have a high risk for misuse (e.g., pre-trained language models, image generators, or scraped datasets)?
    \item[] Answer: \answerNA{} 
    \item[] Justification: The paper does not release pre-trained models or datasets that pose a high risk for misuse.
    \item[] Guidelines:
    \begin{itemize}
        \item The answer \answerNA{} means that the paper poses no such risks.
        \item Released models that have a high risk for misuse or dual-use should be released with necessary safeguards to allow for controlled use of the model, for example by requiring that users adhere to usage guidelines or restrictions to access the model or implementing safety filters. 
        \item Datasets that have been scraped from the Internet could pose safety risks. The authors should describe how they avoided releasing unsafe images.
        \item We recognize that providing effective safeguards is challenging, and many papers do not require this, but we encourage authors to take this into account and make a best faith effort.
    \end{itemize}

\item {\bf Licenses for existing assets}
    \item[] Question: Are the creators or original owners of assets (e.g., code, data, models), used in the paper, properly credited and are the license and terms of use explicitly mentioned and properly respected?
    \item[] Answer: \answerNA{} 
    \item[] Justification: The paper uses only standard open-source libraries (PyTorch, NeuroKit2) and in-house collected data; no external datasets or pre-trained models are used.
    \item[] Guidelines:
    \begin{itemize}
        \item The answer \answerNA{} means that the paper does not use existing assets.
        \item The authors should cite the original paper that produced the code package or dataset.
        \item The authors should state which version of the asset is used and, if possible, include a URL.
        \item The name of the license (e.g., CC-BY 4.0) should be included for each asset.
        \item For scraped data from a particular source (e.g., website), the copyright and terms of service of that source should be provided.
        \item If assets are released, the license, copyright information, and terms of use in the package should be provided. For popular datasets, \url{paperswithcode.com/datasets} has curated licenses for some datasets. Their licensing guide can help determine the license of a dataset.
        \item For existing datasets that are re-packaged, both the original license and the license of the derived asset (if it has changed) should be provided.
        \item If this information is not available online, the authors are encouraged to reach out to the asset's creators.
    \end{itemize}

\item {\bf New assets}
    \item[] Question: Are new assets introduced in the paper well documented and is the documentation provided alongside the assets?
    \item[] Answer: \answerYes{} 
    \item[] Justification: The PPG and SCG datasets introduced in this work are documented in Appendix~\ref{app:dataset-details}, including hardware, cohort composition, preprocessing, and consent procedures.
    \item[] Guidelines:
    \begin{itemize}
        \item The answer \answerNA{} means that the paper does not release new assets.
        \item Researchers should communicate the details of the dataset\slash code\slash model as part of their submissions via structured templates. This includes details about training, license, limitations, etc. 
        \item The paper should discuss whether and how consent was obtained from people whose asset is used.
        \item At submission time, remember to anonymize your assets (if applicable). You can either create an anonymized URL or include an anonymized zip file.
    \end{itemize}

\item {\bf Crowdsourcing and research with human subjects}
    \item[] Question: For crowdsourcing experiments and research with human subjects, does the paper include the full text of instructions given to participants and screenshots, if applicable, as well as details about compensation (if any)? 
    \item[] Answer: \answerYes{} 
    \item[] Justification: Data were collected under IRB-approved protocols with written informed consent and compensation from all participants (Appendix~\ref{app:dataset-details}).
    \item[] Guidelines:
    \begin{itemize}
        \item The answer \answerNA{} means that the paper does not involve crowdsourcing nor research with human subjects.
        \item Including this information in the supplemental material is fine, but if the main contribution of the paper involves human subjects, then as much detail as possible should be included in the main paper. 
        \item According to the NeurIPS Code of Ethics, workers involved in data collection, curation, or other labor should be paid at least the minimum wage in the country of the data collector. 
    \end{itemize}

\item {\bf Institutional review board (IRB) approvals or equivalent for research with human subjects}
    \item[] Question: Does the paper describe potential risks incurred by study participants, whether such risks were disclosed to the subjects, and whether Institutional Review Board (IRB) approvals (or an equivalent approval/review based on the requirements of your country or institution) were obtained?
    \item[] Answer: \answerYes{} 
    \item[] Justification: Both datasets were collected under separate IRB-approved protocols at the authors' institution (Appendix~\ref{app:dataset-details}).
    \item[] Guidelines:
    \begin{itemize}
        \item The answer \answerNA{} means that the paper does not involve crowdsourcing nor research with human subjects.
        \item Depending on the country in which research is conducted, IRB approval (or equivalent) may be required for any human subjects research. If you obtained IRB approval, you should clearly state this in the paper. 
        \item We recognize that the procedures for this may vary significantly between institutions and locations, and we expect authors to adhere to the NeurIPS Code of Ethics and the guidelines for their institution. 
        \item For initial submissions, do not include any information that would break anonymity (if applicable), such as the institution conducting the review.
    \end{itemize}

\item {\bf Declaration of LLM usage}
    \item[] Question: Does the paper describe the usage of LLMs if it is an important, original, or non-standard component of the core methods in this research? Note that if the LLM is used only for writing, editing, or formatting purposes and does \emph{not} impact the core methodology, scientific rigor, or originality of the research, declaration is not required.
    \item[] Answer: \answerNA{} 
    \item[] Justification: LLMs were used only for writing and editing, not as a component of the core methodology.
    \item[] Guidelines:
    \begin{itemize}
        \item The answer \answerNA{} means that the core method development in this research does not involve LLMs as any important, original, or non-standard components.
        \item Please refer to our LLM policy in the NeurIPS handbook for what should or should not be described.
    \end{itemize}

\end{enumerate}

\end{document}